\documentclass{article}
\usepackage{iclr2027_conference,times}

\usepackage{amsmath,amssymb,amsfonts,amsthm,bm}
\usepackage{mathtools}
\usepackage{graphicx}
\usepackage{booktabs}
\usepackage{enumitem}
\usepackage{xcolor}
\IfFileExists{placeins.sty}{\usepackage{placeins}}{}
\usepackage{microtype}
\usepackage{algorithm}
\usepackage{algorithmic}
\usepackage{hyperref}
\usepackage{url}
\usepackage{comment}
\hypersetup{hidelinks}

\newtheorem{theorem}{Theorem}
\newtheorem{lemma}{Lemma}
\newtheorem{proposition}{Proposition}
\newtheorem{corollary}{Corollary}

\newcommand{\R}{\mathbb{R}}
\newcommand{\M}{\mathcal{M}}
\newcommand{\D}{\mathcal{D}}
\newcommand{\E}{\mathbb{E}}
\newcommand{\dist}{\operatorname{dist}}
\newcommand{\Id}{I}
\newcommand{\dd}{\mathrm{d}}
\newcommand{\Exp}{\operatorname{Exp}}
\newcommand{\Log}{\operatorname{Log}}
\newcommand{\SO}{\mathrm{SO}}

\newcommand{\skewop}{\operatorname{skew}}
\newcommand{\symop}{\operatorname{sym}}
\newcommand{\argmin}{\operatorname*{arg\,min}}
\newcommand{\proxyM}{\widehat{\M}_{\mathrm{proxy}}}
\newcommand{\LFM}{\mathcal{L}_{\mathrm{FM}}}

\title{Manifold-Stable Flow Matching}

\author{
  Amirhossein Nazerian \\ 
  Department of Mechanical Engineering \\ 
  Colorado State University \\ 
  Fort Collins, CO 80523, USA \\ 
  \texttt{a.nazerian@colostate.edu} 
  \And
  Ali Pezeshki \\ 
  Department of Electrical and Computer Engineering \\ 
  Colorado State University \\ 
  Fort Collins, CO 80523, USA \\ 
  \texttt{ali.pezeshki@colostate.edu} 
  \AND
  Jianguo Zhao  \\ 
  Department of Mechanical Engineering \\ 
  Colorado State University \\ 
  Fort Collins, CO 80523, USA \\ 
  \texttt{jianguo.zhao@colostate.edu} 
}

\iclrfinalcopy 

\begin{document}

\maketitle

\lhead{}

\begin{abstract}
Flow matching (FM) learns generative dynamics through velocity regression. Geometric FM variants commonly assume a prior supported on the data manifold, requiring geometric knowledge that is often unavailable. Without such knowledge, low regression error alone does not guarantee manifold adherence. Adherence keeps generated samples within valid configurations and is empirically associated with better task performance. We introduce manifold-stable flow matching (MSFM), which can start from an arbitrary ambient prior, not necessarily supported on the manifold. Using tools from nonlinear dynamics, namely contraction theory, MSFM combines learned tangential transport with prescribed normal contraction. The construction uses analytical projectors for known manifolds and local affine proxies estimated by principal component analysis for unknown data geometry. By implementing contraction theory in both cases of known and unknown manifolds, we guarantee manifold invariance and transverse convergence to the manifold within a desired time window (e.g., one second). We derive a family of compatible probability paths and decompose the training loss into a learnable tangential term and a normal residual. An ellipse experiment attains a mean terminal off-manifold error of order $10^{-6}$. In Push-T robotic experiments, MSFM raises success from $74\%$ to $82\%$. In the Robomimic Square task, success increases from $60\%$ to $72\%$, while rotation-manifold deviation decreases from order $10^{-2}$ to $10^{-7}$. The MSFM terminal geometric errors are controlled by the chosen numerical tolerance. These results demonstrate stronger geometric adherence and higher observed task performance, supporting prescribed normal contraction as a complement to learned generative transport.
\end{abstract}

\section{Introduction}
\label{sec:introduction}

Flow matching (FM) learns a time-dependent vector field whose flow transports a simple source distribution to a target data distribution through a regression objective rather than repeated ODE simulation during training \citep{lipman2023flowmatching,albergo2025stochastic}. This framework is especially attractive for generative policies because inference reduces to integrating a learned dynamical system. However, many target distributions in robotics and structured generation are supported on, or concentrated near, a lower-dimensional manifold $\M\subset\R^n$. Examples include rotation matrix parameterizations, rigid-body poses, constrained actions, and data-driven task manifolds.

Adherence to this manifold matters. First, samples off the manifold may not
be valid outputs: a generated rotation that is not in $\mathrm{SO}(3)$ cannot
be executed and must be projected after the fact, and projecting a sample
that has drifted far from the manifold can yield an unintended action.
Second, when the manifold is unknown, as for expert actions in imitation
learning, samples off the data geometry correspond to actions the
demonstrations do not support. In closed-loop control, such errors can
compound, and adherence to the expert action geometry can matter more for
performance than low validation error \citep{pan2026muchado}.

Conventional FM, however, controls only the expected velocity error on sampled
training paths. It matches distributions through training but provides no
mechanism that pulls generated trajectories back to the manifold, so a model with low regression loss can still drift away from it. Existing remedies are
limited: post-hoc projection requires a known manifold and corrects only the
final sample, while geometric FM methods require both a known manifold and a
prior supported on it.

We propose \emph{manifold-stable flow matching} (MSFM), which separates learned transport along the data geometry from prescribed convergence toward it. 
For a known manifold, analytical tangent and normal projectors define this separation, while local principal component analysis (PCA) is used for unknown manifolds. 
Using contraction theory from nonlinear dynamics, we form a normal contractive flow to the data manifold, such that the flow (starting from a prior distribution outside of the manifold) is guaranteed to converge to the manifold within a desired time window (e.g., 1 second).
Figure~\ref{fig:overview} provides a schematic comparison between the conventional FM and our novel MSFM.

Our main contribution is MSFM, a flow-matching framework that provably converges to the data manifold from an arbitrary ambient prior within a finite time window. 
This guarantee holds regardless of regression accuracy since the network learns only tangential transport while normal contraction is prescribed.
We use analytical projectors when the geometry is known and local PCA affine proxies otherwise. 
We support this framework theoretically by deriving a family of compatible tangent--normal probability paths and an orthogonal loss decomposition that separates what the network learns from what is prescribed.
We prove manifold invariance and transverse convergence.
We support it empirically on an ellipse and two robotic manipulation tasks: under matched network capacity, MSFM reduces off-manifold error by multiple orders of magnitude on Push-T and Robomimic Square, and raises observed success.

\begin{figure}[t]
    \centering
    \includegraphics[width=0.85\linewidth]{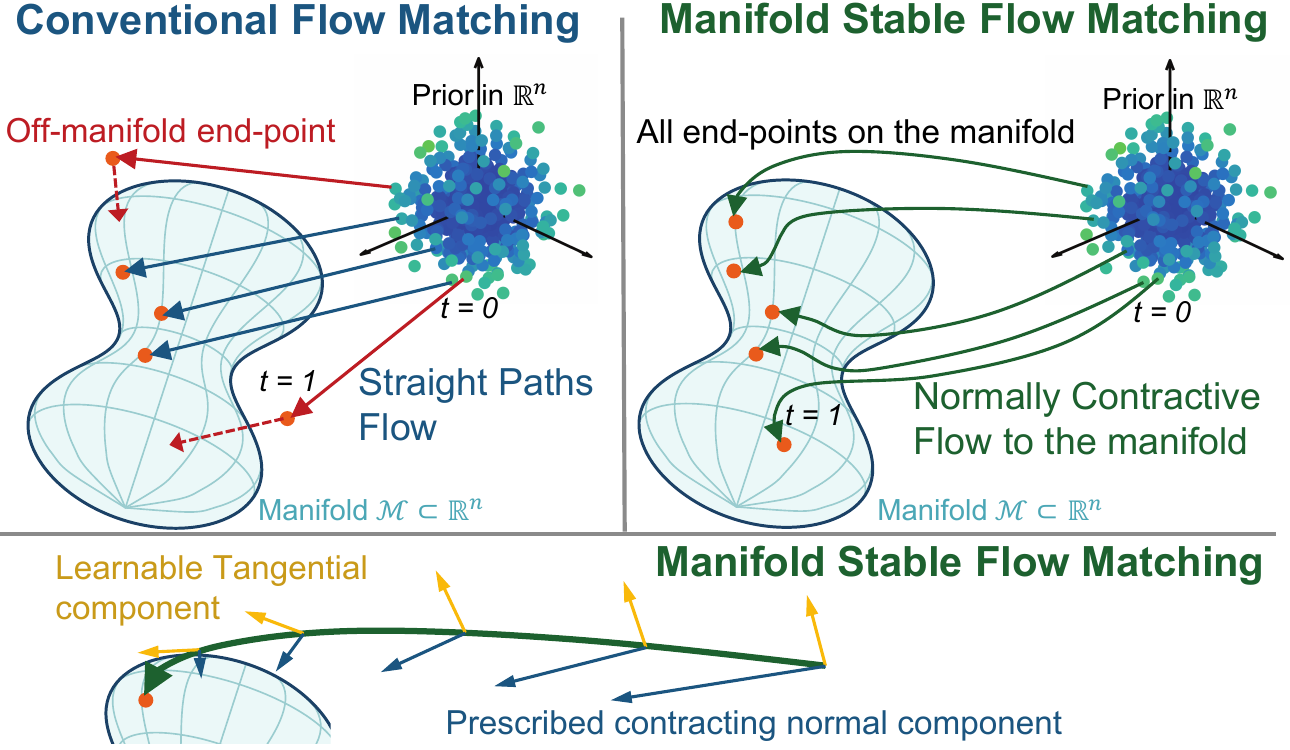}
    \caption{Schematic comparison. Linear conditional FM paths do not necessarily stay near a curved target manifold (top-left panel). 
    MSFM (top-right panel) learns tangential transport and prescribes normal contraction, which guarantees convergence to the manifold.
    The learned MSFM sampling trajectories can curve because MSFM learns tangential motion, while it prescribes normal contraction (bottom panel).
    The normal component of the flow uniquely minimizes kinetic action. 
    }
    \label{fig:overview}
\end{figure}

\section{Related Work}
\label{sec:related}

\paragraph{Intrinsic flow matching.}
Flow matching learns transport through velocity regression \citep{lipman2023flowmatching}.
Riemannian flow matching defines paths and tangent vector fields on a known manifold \citep{chen2023riemannian}, with applications to robot policies and pose estimation \citep{braun2024rfmp,ding2025rfmp,ouyang2026rfmpose}.
Extensions modify the regression objective \citep{zaghen2025towards,zaghen2026riemannian} or target one-step generation \citep{zhong2026riemannian}.
These intrinsic formulations start from manifold-supported priors, often by mapping noise onto the manifold.
Pullback flow matching learns geometry for intrinsic transport but also uses a manifold-adapted source \citep{de2024pullback}.

\paragraph{Learned geometry and ambient-space transport.}
Metric flow matching learns ambient metrics and interpolants that favor proximity to observations \citep{kapusniak2024metric}.
Meta Flow Matching instead studies dynamics on the Wasserstein space of distributions, rather than imposing a geometric constraint on individual samples \citep{atanackovic2025meta}.
Energy matching transports ambient noise using a learned potential \citep{balcerak2026energy}.
Recent works also provided flow matching variations that respect data geometry \citep{bamberger2026carre,jiang2026bures,kumar2026flow,habashy2026geodesic,cai2026improvingclassifierfreeguidanceflow,wu2026flow}.
Flow matching has also been investigated in constrained generation, stability, and domain topology contexts \citep{li2026gauge,ICLR2026_d33a2d61,ICLR2026_58b1b19b,ICLR2026_238e6167,ICLR2026_a57483b3,ICLR2026_e8b782d0}.
For data on a linear subspace, \citet{pi2026learning} use ambient Gaussian noise, retain an analytical normal velocity, and derive statistical convergence bounds.
Such bounds complement geometric stability: finite distributional error does not ensure exact manifold membership or a prescribed trajectory-wise contraction rate, although vanishing Wasserstein error implies concentration near a manifold-supported target.

\paragraph{Stable learned dynamics.}
Contraction theory provides tools for incremental and partial stability \citep{lohmiller1998contraction,wang2005partial,tsukamoto2021tutorial,aminzare2014contraction,FB-CTDS}, with developments in nonlinear control, modular systems, and differential growth analysis \citep{lohmiller2000nonlinear,slotine2001modularity,slotine2003modular,aminzare2013logarithmic,nazerian2024bridging}.
Structured neural dynamics impose contraction \citep{beik2024ncds,jaffe2024elcd}; related stability constructions appear in flow matching \citep{sprague2024stablefm} and LaSalle-based Riemannian policies \citep{ding2025rfmp}.

\paragraph{Generative robotic policies.}
Diffusion Policy and Robomimic provide established settings for generative policies and offline imitation learning \citep{chi2023diffusionpolicy,mandlekar2022robomimic}.
Generative robotic policies have used manifold-valued poses or states \citep{ryu2024diffusionedfs,chatzipantazis2025stride}, spatial equivariance \citep{tie2025etseed,wang2025equivariantpolicy,yang2025equibot,zhu2025spherical}, and projection onto feasible trajectories \citep{bouvier2025ddat}, while empirical work has examined how accurately diffusion policies learn kinematic constraint manifolds \citep{foland2025kinematic}.
Flow-matching policies have also studied point-cloud-conditioned action generation, spatial equivariance, and consistency training for manipulation \citep{chisari2025pointflowmatch,zhang2026e3flow,yan2025maniflow}.

MSFM learns tangential transport while prescribing normal contraction independently of tangential regression accuracy.
Crucially, its prior need not lie on, or first be mapped onto, the target manifold.
This permits analytical projectors for known manifolds or local PCA affine proxies for unknown expert-action geometry, without learning a global intrinsic parameterization.
Under the stated projection and proxy-selection assumptions, contraction theory guarantees convergence from admissible ambient initializations to the known manifold or the modeled proxy geometry, respectively.

\section{Background and Problem Formulation}
\label{sec:background}

We first review the conventional FM.
Let $p_0$ be a source distribution on $\R^n$ and $p_{\mathrm{data}}$ the target distribution. Conditional flow matching samples $x_0\sim p_0$, $x_1\sim p_{\mathrm{data}}$, and $t\sim\mathcal U[0,1]$, constructs $x_t=\gamma(t;x_0,x_1)$ with velocity $u_t=\partial_t\gamma(t;x_0,x_1)$, and minimizes
\begin{equation}
    \LFM(\theta)
    =
    \E\!\left[\|v_\theta(t,x_t)-u_t\|^2\right].
    \label{eq:fm_loss}
\end{equation}
At inference, one draws $x(0)\sim p_0$ and integrates $\dot x=v_\theta(t,x)$.
The usual Euclidean path is $x_t=(1-t)x_0+tx_1$, with $\dot{x}_t=x_1-x_0$. More general scalar interpolations change all ambient directions, including components locally tangent and normal to the data manifold.

\subsection{Embedded-Manifold Geometry}

Let $\M\subset\R^n$ be a smooth embedded manifold. Close enough to $\M$, in a
tubular neighborhood $\mathcal U$, every point $x$ has a unique nearest point
on the manifold, $y=\Pi(x)$, which depends smoothly on $x$. We define an offset $r(x) = x - \Pi(x)=x-y$, which means we can split $x$ into 
this base point and the offset, i.e., $x = y + r(x)$.
At $y$, let $P_y$ project onto the tangent space $T_y\M$ and $Q_y=\Id-P_y$
onto the normal space. The offset $r$ points straight away from the
manifold, so $Q_y r = r$, and its length is the distance to $\M$. We measure
this distance with
\begin{equation}
    V_\M(x)=\tfrac12\dist(x,\M)^2=\tfrac12\|r(x)\|^2,
    \qquad \nabla V_\M(x)=r(x).
    \label{eq:distance_energy_main}
\end{equation}
\paragraph{Why conventional flow matching does not imply stability.}
Along a generated trajectory $\dot x=v_\theta(t,x)$, the distance changes as
$\dot V_\M = r(x)^\top v_\theta(t,x)$: only the normal component of the
velocity matters. Samples are attracted to $\M$ if this rate is negative
everywhere off the manifold, for instance
\begin{equation}
    \dot V_\M = r(x)^\top v_\theta(t,x) \le -\alpha(t)\|r(x)\|^2,
    \qquad \alpha(t)>0,
    \label{eq:transverse_condition}
\end{equation}
which makes the distance shrink at rate at least $\alpha(t)$ at every point
of every trajectory. The regression loss \eqref{eq:fm_loss} only makes the
velocity error small on average over sampled training paths, so it cannot
enforce this pointwise condition: a model with low loss can still drift away
from $\M$ at inference. MSFM enforces \eqref{eq:transverse_condition} by
construction while learning tangential transport.

\section{Manifold-Stable Flow Matching}
\label{sec:method}
MSFM has two ingredients: a \emph{sampling field}, used at inference, that
learns motion along the manifold and prescribes contraction toward it; and a
\emph{training path} that supplies the regression targets. The two must agree
in the normal direction.
We construct both for known manifolds (Section~\ref{sec:known_method}) and for
unknown manifolds via local affine proxies (Section~\ref{sec:proxy_method}).

\subsection{Known-Manifold Construction}
\label{sec:known_method}

For an unconstrained network $w_\theta$ and $y=\Pi(x)$, define
\begin{equation}
    v_\theta(t,x)=P_yw_\theta(t,x)-\alpha(t)r(x).
    \label{eq:known_vector_field}
\end{equation}
The learned component is the tangent $P_yw_\theta(t,x)$; the prescribed normal feedback $-\alpha(t)r(x)$ vanishes on $\M$. The positive, locally integrable rate $\alpha (t)$ is specified in Section~\ref{sec:schedule}.

For source $x_0\in\mathcal U$ and target $x_1\in\M$, construct
\begin{equation}
    x_t=y_t+r_t,\qquad y_t\in\M,\quad r_t\in N_{y_t}\M.
    \label{eq:tubular_path_main}
\end{equation}
Choose a smooth base curve from $y_0=\Pi(x_0)$ to $y_1=x_1$, for example the geodesic $y_t=\Exp_{y_0}(t\Log_{y_0}(x_1))$, using the Riemannian exponential and a suitable logarithm branch. Set $r_0=x_0-y_0$ and $Q_t=Q_{y_t}$, and define
\begin{equation}
    \dot r_t=\dot Q_t r_t-\alpha(t)r_t,\qquad r(0)=r_0.
    \label{eq:normal_path_ode_main}
\end{equation}
Here $\dot Q_t=\dd Q_{y_t}/\dd t$ accounts for the changing normal space, while
\begin{equation*}
    \|r_t\|=q_\perp(t)\|r_0\|,\qquad
    q_\perp(t)=\exp\!\left(-\int_0^t\alpha(\tau)\,\dd\tau\right).
\end{equation*}
For each sampled time $t$, first obtain $r_t$ by evaluating
the solution of \eqref{eq:normal_path_ode_main}, either in
closed form when available or by integrating this prescribed
linear ODE from $0$ to $t$. The training pair is then
\begin{equation*}
    x_t=y_t+r_t,
    \qquad
    u_t=\dot y_t+\dot Q_t r_t-\alpha(t)r_t.
\end{equation*}
For the $\SO(3)$ geometry used in our experiments,
$r_t$ has the explicit expression given in
\eqref{eq:so3_normal_path_appendix} (Appendix~\ref{app:lie_groups}).
Assuming $\Pi(x_t)=y_t$ throughout, sampling $x_0\sim p_0$ defines $p_t(\cdot\mid x_1)$ as the law of $x_t$. 
Lemma~\ref{lem:compatible_paths} in Appendix~\ref{app:compatible_paths} proves that $r_t$ remains normal at $y_t$, with $\|r_t\|=q_\perp(t)\|r_0\|$, and that the target velocity satisfies $Q_tu_t=-\alpha(t)r_t$.
If $y_t\to x_1$ and $\int_0^1\alpha(t)\,\dd t=\infty$, the resulting path converges to $x_1$ as $t\to1^-$.

\subsection{Unknown Manifold: Local Affine Proxies}
\label{sec:proxy_method}

For samples $\D=\{x_i^\ast\}_{i=1}^N$, local PCA supplies orthonormal tangent bases $T_i$ and affine proxies
\begin{equation*}
    P_i=T_iT_i^\top,\quad Q_i=\Id-P_i,\quad
    \widehat\M_i=x_i^\ast+\operatorname{range}(T_i).
\end{equation*}
Define $r_i(x)=Q_i(x-x_i^\ast)$. Select a proxy by nearest anchor or minimum normal residual, $i(x)\in\argmin_i\tfrac12\|r_i(x)\|^2$, and use
\begin{equation}
    v_\theta(t,x)=P_{i(x)}w_\theta(t,x)-\alpha(t)r_{i(x)}(x).
    \label{eq:proxy_vector_field_main}
\end{equation}
This data-derived geometry is related to off-manifold residual metrics \citep{pan2026muchado}. Algorithm~\ref{alg:proxy_preprocessing} gives its construction.

For a fixed proxy containing $x_1$, set $s_0=P_i(x_0-x_1)$ and $r_0=Q_i(x_0-x_1)$. Since $\dot Q_i=0$, a compatible path is
\begin{equation}
    x_t=x_1+(1-t)s_0+q_\perp(t)r_0,\qquad
    u_t=-s_0-\alpha(t)q_\perp(t)r_0.
\label{eq:affine_compatible_path_main}
\end{equation}

\subsection{Algorithms}
\label{sec:algorithms}

Training regresses the tangential velocity and records the normal residual separately. Sampling integrates the learned field from $p_0$, using current-state geometry.

Algorithm~\ref{alg:proxy_preprocessing} is needed only when the manifold is unknown. Algorithms~\ref{alg:proxy_training} and~\ref{alg:proxy_inference} apply to both known and unknown manifolds, through the following shared notation:
\begin{equation}
\label{eq:unifiednotation}
(P(x),r(x))=
\begin{cases}
\bigl(P_{\Pi(x)},\,x-\Pi(x)\bigr),
& \text{known manifold},\\[2pt]
\bigl(P_{i(x)},\,Q_{i(x)}(x-x_{i(x)}^\ast)\bigr),
& \text{unknown manifold},
\end{cases}
\qquad Q(x)=\Id-P(x).
\end{equation}

\begin{algorithm}[ht]
\caption{Local Proxy Preprocessing (Only when the manifold is unknown)}
\label{alg:proxy_preprocessing}
\begin{algorithmic}[1]
\REQUIRE Dataset $\D$, neighbor count $k$, dimension rule
\FOR{each anchor $x_i^\ast\in\D$}
    \STATE Form $Y_i$ from its $k$ nearest-neighbor differences.
    \STATE Let $T_i$ contain the leading $d_i$ left singular vectors of $Y_i$.
    \STATE Set $P_i=T_iT_i^\top$ and $Q_i=\Id-P_i$.
\ENDFOR
\STATE \textbf{return} $\{P_i,Q_i\}_{i=1}^N$.
\end{algorithmic}
\end{algorithm}

\begin{algorithm}[ht]
\caption{MSFM Training}
\label{alg:proxy_training}
\begin{algorithmic}[1]
\REQUIRE $p_0$, data, geometry, rate $\alpha$, cutoff $\delta$, network $w_\theta$
\WHILE{not converged}
    \STATE Sample $(x_0,x_1,t)$ and construct $(x_t,u_t)$ using the chosen path.
    \STATE Evaluate $P$, $Q$, and $r$ at $x_t$ using the sampling geometry.
    \STATE Update $\theta$ to minimize $\|P(w_\theta(t,x_t)-u_t)\|^2$.
    \STATE Record the normal residual $\|Qu_t+\alpha(t)r\|^2$.
\ENDWHILE
\end{algorithmic}
\end{algorithm}

\begin{algorithm}[ht]
\caption{MSFM Sampling}
\label{alg:proxy_inference}
\begin{algorithmic}[1]
\REQUIRE $p_0$, trained $w_\theta$, geometry, rate $\alpha$, cutoff $\delta$
\STATE Draw $x(0)\sim p_0$; hold any observed condition fixed.
\FOR{ODE steps up to $t=1-\delta$}
    \STATE Evaluate the current-state field \eqref{eq:known_vector_field} or \eqref{eq:proxy_vector_field_main}.
    \STATE Advance the numerical solver.
\ENDFOR
\STATE \textbf{return} $x(1-\delta)$.
\end{algorithmic}
\end{algorithm}

\subsection{Contraction Schedule}
\label{sec:schedule}

We use constant plus singular contraction, with $\alpha_0\geq0$ and $\beta>0$:
\begin{align}
    \alpha(t)&=\alpha_0+\frac{\beta}{1-t},\qquad 0\leq t<1,
    \label{eq:singular_schedule_main}
\end{align}
and $q_\perp(t)=e^{-\alpha_0t}(1-t)^\beta$. Since $q_\perp(t)\to0$, normal displacement vanishes at the terminal limit. Numerically, integrate to $1-\delta$ in log time $\tau=-\log(1-t)$, which makes $(1-t)\alpha(t)=\alpha_0e^{-\tau}+\beta$ bounded; details are in Appendix~\ref{app:algorithms}.

\paragraph{Linear-path special case.}
For a fixed affine proxy containing $x_1$, $\alpha_0=0$ and $\beta=1$ reduce \eqref{eq:affine_compatible_path_main} to $x_t=(1-t)x_0+tx_1$ and $u_t=x_1-x_0$. This does not extend to curved-manifold paths.

\paragraph{Physical interpretation.}
The normal potential $V=\tfrac12\|r_t\|^2$ satisfies $\dot V=-2\alpha(t)V$: tangential transport accompanies normal relaxation. For a fixed base curve and $\alpha_0=0$, the normal path also uniquely minimizes a time-weighted normal kinetic action. When $\beta=1$, the weight is constant; the full ambient path can still curve as the base point and normal space move. This is normal-motion optimality, not an optimal-transport claim for the full trajectory. Appendix~\ref{app:normal_kinetic_action} defines the action and proves the result.

Since all contracting dynamical systems are dissipative (i.e., the volume of the ball of initial conditions tends to zero eventually), our MSFM is also normally dissipative: the volume of all perturbations to the initial condition in the normal direction to the manifold will tend to zero.

\section{Theoretical Guarantees}
\label{sec:theory}

\begin{proposition}[Orthogonal MSFM loss decomposition]
\label{prop:loss_decomposition}
At a fixed training pair $(x_t,u_t)$, let $P$ be either geometry's tangent projector in \eqref{eq:unifiednotation}, $Q=\Id-P$, and $r$ its normal residual. For $v_\theta=Pw_\theta-\alpha r$,
\begin{equation}
    \|v_\theta-u_t\|^2
    =\|P(w_\theta-u_t)\|^2+\|Qu_t+\alpha(t)r\|^2.
    \label{eq:loss_decomposition}
\end{equation}
The normal term is parameter-independent and vanishes exactly when $Qu_t=-\alpha(t)r$.
\end{proposition}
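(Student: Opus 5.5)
The plan is to split the residual $v_\theta-u_t$ into its tangential and normal parts and apply Pythagoras. The facts needed are that $P$ is an orthogonal projector, that $Q=\Id-P$ is its complement, and that the residual $r$ is normal, $Qr=r$. I will check these for each geometry in \eqref{eq:unifiednotation}.

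\textbf{Known manifold.} Here $P=P_{\Pi(x_t)}$ is the orthogonal projector onto $T_y\M$, and $r=x_t-\Pi(x_t)$ lies in the normal space, as stated in Section~\ref{sec:background}. Hence $Pr=0$ and $Qr=r$.

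\textbf{Unknown manifold.} Here $P=T_iT_i^\top$ with $T_i$ having orthonormal columns, so $P$ is symmetric and idempotent. Since $r=Q_i(x_t-x_i^\ast)$ and $Q_i$ is idempotent, we get $Qr=Q_i^2(\cdot)=r$ and $Pr=0$.

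In both cases $P$ and $Q$ are complementary symmetric idempotents. Therefore $PQ=0$, and $\|a\|^2=\|Pa\|^2+\|Qa\|^2$ for every $a\in\R^n$.

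Apply this identity with $a=v_\theta-u_t=Pw_\theta-\alpha(t)r-u_t$. The projected pieces are:
\begin{itemize}
\item Tangential part: $P^2=P$ and $Pr=0$ give $Pa=Pw_\theta-Pu_t=P(w_\theta-u_t)$.
\item Normal part: $QP=0$ and $Qr=r$ give $Qa=-\alpha(t)r-Qu_t=-(Qu_t+\alpha(t)r)$.
\end{itemize}
Squaring the norms of these two pieces and adding them yields \eqref{eq:loss_decomposition}.

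For the final claims, observe that the normal term $\|Qu_t+\alpha(t)r\|^2$ is built only from $u_t$, $P$ (through $Q$), $r$ and $\alpha(t)$. At the fixed pair $(x_t,u_t)$, all of these are determined by the geometry and the prescribed schedule, and none depends on $\theta$. A squared norm is zero exactly when its argument is zero, so the term vanishes precisely when $Qu_t=-\alpha(t)r$.

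The only step that needs care is verifying $Qr=r$ and $P^\top=P$ in both geometries. If the proxy $i(x_t)$ or the projection $\Pi(x_t)$ is evaluated at $x_t$, both properties hold by construction, since the statement fixes $P$ and $r$ at the same point $x_t$. With that checked, the rest is direct algebra.
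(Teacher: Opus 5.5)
Your proposal is correct and matches the paper's proof: both check that $P$ and $Q=\Id-P$ are complementary orthogonal projectors with $Pr=0$ and $Qr=r$ in either geometry, then split $v_\theta-u_t$ into $P(w_\theta-u_t)$ and $-(Qu_t+\alpha(t)r)$ and apply Pythagoras. Computing $\|Pa\|^2+\|Qa\|^2$, rather than noting that the two pieces are orthogonal as the paper does, is the same argument.
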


Assume $\alpha (t)$ is positive and locally integrable for $t \in [0,1)$, with $\int_0^1\alpha(t)\,\dd t=\infty$ (e.g., as in \eqref{eq:singular_schedule_main}). Solutions must exist up to each $t<1$; for the known-manifold field, assume local uniqueness and that the trajectory remains in the smooth projection neighborhood.
We first show that, for a known manifold, the prescribed normal feedback alone
keeps samples on $\mathcal{M}$ and drives off-manifold samples toward it.

\begin{theorem}[Manifold invariance and convergence]
\label{thm:known_main}
Suppose the solution of \eqref{eq:known_vector_field} remains in $\mathcal U$. If $x(0)\in\M$, then $x(t)\in\M$ for all times of existence. For arbitrary $x(0)\in\mathcal U$,
\begin{equation}
    \dist(x(t),\M)
    \leq
    \exp\!\left(-\int_0^t\alpha(\tau)\,\dd\tau\right)
    \dist(x(0),\M).
    \label{eq:known_bound_main}
\end{equation}
Then $\dist(x(t),\M)\to0$ as $t\to1^-$.
\end{theorem}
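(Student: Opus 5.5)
The plan is a Lyapunov argument on the distance energy $V_\M(x)=\tfrac12\|r(x)\|^2$ from \eqref{eq:distance_energy_main}, followed by Grönwall's inequality. Inside the tubular neighborhood $\mathcal U$, $V_\M$ is $C^1$ with $\nabla V_\M(x)=r(x)$, and $r(x)\in N_{y}\M$ for $y=\Pi(x)$. This normality gives $r(x)^\top P_y=0$, since $P_y$ is the orthogonal projector onto $T_y\M$ and $r(x)=Q_y r(x)$.

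Along a solution of \eqref{eq:known_vector_field} that remains in $\mathcal U$, the chain rule gives
\begin{equation*}
\frac{\dd}{\dd t}V_\M(x(t)) = r(x)^\top\bigl(P_y w_\theta(t,x)-\alpha(t)r(x)\bigr) = -\alpha(t)\|r(x)\|^2 = -2\alpha(t)V_\M(x(t)).
\end{equation*}
So the transverse condition \eqref{eq:transverse_condition} holds with equality, independently of $w_\theta$. This is an exact linear scalar ODE for $V_\M$ along the trajectory. Solving it with the integrating factor $\exp(2\int_0^t\alpha)$ gives $V_\M(x(t))=\exp(-2\int_0^t\alpha)V_\M(x(0))$. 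Local integrability of $\alpha$ on $[0,1)$ is what makes the integral finite, and $V_\M(x(t))$ is absolutely continuous, so the computation is justified in the Carathéodory sense.

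Taking square roots gives \eqref{eq:known_bound_main}, in fact with equality. The two remaining claims are read off from this identity. If $x(0)\in\M$, then $V_\M(x(0))=0$, so $V_\M(x(t))=0$ for all times of existence, which means $x(t)\in\M$. This is manifold invariance. It relies on $\M$ being the zero set of $V_\M$ within $\mathcal U$, since $\Pi(x)=x$ exactly when $x\in\M$. For convergence, $\int_0^t\alpha\to\infty$ as $t\to1^-$ by the standing assumption, so the exponential factor tends to $0$ and $\dist(x(t),\M)\to0$.

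The only delicate point is the regularity of $V_\M$. I will use the standard tubular neighborhood facts: on $\mathcal U$ the nearest-point map $\Pi$ is smooth and $\nabla\tfrac12\dist^2=x-\Pi(x)$. The paper takes these as given in \eqref{eq:distance_energy_main}, and the hypothesis that the trajectory stays in $\mathcal U$ is exactly what lets us apply them for all $t$. Everything else is a routine one-line computation.
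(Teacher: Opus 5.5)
Your proof is correct and follows the paper's argument: the identity $\dot V_\M=-2\alpha(t)V_\M$, obtained from $r\perp T_{\Pi(x)}\M$, is integrated exactly and then square-rooted. The one difference is invariance. The paper gets it from tangency of the field on $\M$ (there $r=0$, so $v_\theta=P_xw_\theta\in T_x\M$), whereas you read it off the exact identity with $V_\M(x(0))=0$, which has the small advantage of not needing uniqueness or a Nagumo-type tangency argument.
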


The same mechanism applies when the manifold is unknown: using a single local PCA proxy.
\begin{theorem}[Fixed-proxy convergence]
\label{thm:fixed_proxy_main}
Fix $i$ and consider $\dot x=P_iw_\theta(t,x)-\alpha(t)Q_i(x-x_i^\ast)$. Then
\begin{equation}
    \|Q_i(x(t)-x_i^\ast)\|
    \leq
    \exp\!\left(-\int_0^t\alpha(\tau)\,\dd\tau\right)
    \|Q_i(x(0)-x_i^\ast)\|.
    \label{eq:fixed_proxy_bound_main}
\end{equation}
Consequently, $\|Q_i(x(t)-x_i^\ast)\|\to0$ as $t\to1^-$.
\end{theorem}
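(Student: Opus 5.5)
The plan is to work directly with the normal residual $z(t)=Q_i(x(t)-x_i^\ast)$ and show that it obeys a closed linear ODE. This is simpler than the known-manifold case of Theorem~\ref{thm:known_main}, because the projector $Q_i$ and the anchor $x_i^\ast$ are constant: there is no moving normal space, so no $\dot Q$ term appears.

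\textbf{Step 1: a closed ODE for $z$.} Differentiate $z$ along a solution of $\dot x=P_iw_\theta(t,x)-\alpha(t)Q_i(x-x_i^\ast)$, which exists up to each $t<1$ by the standing assumption. Since $Q_i$ is constant,
\[
\dot z = Q_i\dot x = Q_iP_iw_\theta(t,x)-\alpha(t)Q_i^2(x-x_i^\ast).
\]
Because $P_i=T_iT_i^\top$ is an orthogonal projector ($T_i$ has orthonormal columns), we have $Q_iP_i=0$ and $Q_i^2=Q_i$. Hence $\dot z=-\alpha(t)z$. The learned term $w_\theta$ drops out entirely, so no regularity of $w_\theta$ beyond existence of the trajectory is needed.

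\textbf{Step 2: solve and bound.} The scalar-coefficient linear ODE $\dot z=-\alpha(t)z$ has the explicit solution $z(t)=\exp\!\left(-\int_0^t\alpha(\tau)\,\dd\tau\right)z(0)$. This holds for every $t<1$, since $\alpha$ is locally integrable on $[0,1)$ and $z$ is absolutely continuous. It gives \eqref{eq:fixed_proxy_bound_main} with equality, and hence also as an inequality. To avoid invoking uniqueness for a vector ODE, one can instead differentiate $e^{\int_0^t\alpha}z(t)$, whose derivative is zero almost everywhere. Equivalently, one can compute $\tfrac{\dd}{\dd t}\tfrac12\|z\|^2=-\alpha\|z\|^2$ and apply Gr\"onwall, which matches the Lyapunov viewpoint of \eqref{eq:transverse_condition}.

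\textbf{Step 3: the limit.} Since $\int_0^1\alpha=\infty$ and $\alpha>0$, the map $t\mapsto\int_0^t\alpha$ increases to $\infty$ as $t\to1^-$. The exponential factor therefore tends to $0$, which gives the convergence claim; for the schedule \eqref{eq:singular_schedule_main} the factor is explicitly $q_\perp(t)$.

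\textbf{Main obstacle.} The main point needing care is technical rather than conceptual: the trajectory must be absolutely continuous and the ODE must hold almost everywhere, given that $\alpha$ is only locally integrable. I would handle this by working with the integrating-factor identity in integrated (Carath\'eodory) form.
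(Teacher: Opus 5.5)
Your proof is correct and follows the same route as the paper. Both arguments differentiate the constant-projector residual $Q_i(x(t)-x_i^\ast)$, use $Q_iP_i=0$ and $Q_i^2=Q_i$ to get the closed linear ODE $\dot z=-\alpha(t)z$, and solve it explicitly, so the bound in fact holds with equality; your integrating-factor treatment for locally integrable $\alpha$ makes explicit a technical point the paper leaves implicit.
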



During sampling, the active proxy may switch as the state moves. Define
$V_{\mathrm{proxy}}(x)=\min_i \tfrac12\|Q_i(x-x_i^*)\|^2$ and
$\widehat{\mathcal{M}}_{\mathrm{proxy}}=\bigcup_i \widehat{\mathcal{M}}_i$.
The next result shows that such switching does not break convergence.

\begin{theorem}[Convergence to the proxy set]
\label{thm:global_proxy_main}
Under the minimum-residual selector $i(x)\in\argmin_i\tfrac12\|r_i(x)\|^2$, any absolutely continuous solution for which \eqref{eq:proxy_vector_field_main} is well defined satisfies
\begin{equation}
    V_{\mathrm{proxy}}(x(t))
    \leq
    \exp\!\left(-2\int_0^t\alpha(\tau)\,\dd\tau\right)
    V_{\mathrm{proxy}}(x(0)).
    \label{eq:global_proxy_bound_main}
\end{equation}
Consequently, $\dist(x(t),\proxyM)$ obeys the corresponding square-root bound and converges to zero.
\end{theorem}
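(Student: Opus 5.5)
The plan is a nonsmooth Lyapunov argument for the min-function $V_{\mathrm{proxy}}$. I would use the upper Dini derivative, or equivalently the a.e.\ derivative of an absolutely continuous function. Write $V_i(x)=\tfrac12\|Q_i(x-x_i^\ast)\|^2$. Each $V_i$ is a smooth quadratic with $\nabla V_i(x)=Q_i(x-x_i^\ast)=r_i(x)$, since $Q_i$ is a symmetric idempotent. Then $V_{\mathrm{proxy}}=\min_i V_i$ is a minimum of finitely many smooth functions. Hence it is locally Lipschitz, and along any absolutely continuous solution $x(\cdot)$ the composition $t\mapsto V_{\mathrm{proxy}}(x(t))$ is absolutely continuous, so it is differentiable almost everywhere.

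The key pointwise computation comes first. Fix a time $t$ at which $x(\cdot)$ and $V_{\mathrm{proxy}}(x(\cdot))$ are differentiable and the ODE holds, and let $j=i(x(t))$ be the selected index, which lies in the active set $\{i: V_i(x(t))=V_{\mathrm{proxy}}(x(t))\}$. Since $V_{\mathrm{proxy}}\le V_j$ everywhere with equality at $x(t)$, the one-sided difference quotients give $\tfrac{d}{dt}V_{\mathrm{proxy}}(x(t))\le \tfrac{d}{dt}V_j(x(t))$ (taking $h\downarrow0$). This is the \emph{upper} bound we need. Now compute $\tfrac{d}{dt}V_j(x(t))=r_j^\top\dot x=r_j^\top\bigl(P_j w_\theta-\alpha(t)r_j\bigr)$. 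Here $\dot x$ is the field \eqref{eq:proxy_vector_field_main} with the selected proxy $j$, and $r_j^\top P_j=(Q_j(x-x_j^\ast))^\top P_j=0$ because $Q_jP_j=0$ and $Q_j$ is symmetric. This yields $\tfrac{d}{dt}V_j=-\alpha(t)\|r_j\|^2=-2\alpha(t)V_j(x(t))=-2\alpha(t)V_{\mathrm{proxy}}(x(t))$.

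Hence a.e.\ $\tfrac{d}{dt}V_{\mathrm{proxy}}(x(t))\le -2\alpha(t)V_{\mathrm{proxy}}(x(t))$. Multiplying by the integrating factor $\exp(2\int_0^t\alpha)$, which is absolutely continuous by local integrability of $\alpha$ on $[0,1)$, gives an absolutely continuous function with a.e.\ nonpositive derivative. That function is therefore nonincreasing, which yields \eqref{eq:global_proxy_bound_main}.

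For the consequence, I would show $\dist(x,\proxyM)=\min_i\dist(x,\widehat\M_i)$. For each $i$, the affine set $\widehat\M_i$ has orthogonal projection residual $Q_i(x-x_i^\ast)$, so $\dist(x,\widehat\M_i)=\|r_i(x)\|$. Hence $\dist(x,\proxyM)=\sqrt{2V_{\mathrm{proxy}}(x)}\le \exp(-\int_0^t\alpha)\dist(x(0),\proxyM)$, which tends to $0$ because $\int_0^1\alpha=\infty$.

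The main obstacle is the nonsmooth step: justifying that the derivative of the minimum is bounded by the derivative of the \emph{selected} branch even when the selector switches or the solution is only Carathéodory. One-sided difference quotients at points of differentiability handle this cleanly, since differentiability of the composition is available a.e.\ by absolute continuity. If the selector's field is only defined in a Filippov sense, I would note that every active index yields the same bound.
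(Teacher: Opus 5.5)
Your proof is correct and follows essentially the paper's argument: the finite minimum is absolutely continuous along the solution, its a.e.\ derivative is bounded by that of the selected active branch (which you justify more carefully via one-sided difference quotients and $r_j^\top P_j=0$), the inequality $\dot V_{\mathrm{proxy}}\le-2\alpha(t)V_{\mathrm{proxy}}$ is then integrated, and $\dist(x,\proxyM)=\sqrt{2V_{\mathrm{proxy}}(x)}$ gives the distance bound. You should drop the closing Filippov remark, though: a convexified velocity $\sum_i\lambda_i(P_iw_\theta-\alpha r_i)$ produces cross terms $r_j^\top P_iw_\theta$ with $i\neq j$ that need not vanish, and a Filippov sliding solution along a repelling switching surface can violate the bound, whereas the theorem only concerns solutions of the selector field, which your main argument fully covers.
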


Finally, for $\alpha (t)$ in \eqref{eq:singular_schedule_main}, these bounds take an explicit form that quantifies convergence by $t=1$.
\begin{corollary}[Exponential--polynomial terminal bound]
\label{cor:polynomial_bound}
For \eqref{eq:singular_schedule_main}, let $\mathcal S$ denote either $\M$, or a fixed proxy $\widehat\M_i$, or the proxy union $\proxyM$, as appropriate. 
Then
\begin{equation*}
    \dist(x(t),\mathcal S)
    \leq e^{-\alpha_0t}(1-t)^\beta\dist(x(0),\mathcal S).
\end{equation*}
At the numerical terminal time $1-\delta$, the bound is $\dist(x(1-\delta),\mathcal S)\leq e^{-\alpha_0(1-\delta)}\delta^\beta\dist(x(0),\mathcal S)$.
\end{corollary}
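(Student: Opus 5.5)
The corollary follows by substituting the schedule \eqref{eq:singular_schedule_main} into the three preceding bounds. It reduces to one computation plus identifying each left-hand side with $\dist(\cdot,\mathcal S)$.

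\textbf{The integral.} For $0\le t<1$,
\[
\int_0^t\alpha(\tau)\,\dd\tau=\alpha_0t-\beta\log(1-t),
\qquad\text{so}\qquad
\exp\!\left(-\int_0^t\alpha\right)=e^{-\alpha_0t}(1-t)^\beta=q_\perp(t).
\]
The schedule is positive and locally integrable on $[0,1)$. Its integral diverges as $t\to1^-$, so the standing assumptions of Section~\ref{sec:theory} hold and Theorems~\ref{thm:known_main}--\ref{thm:global_proxy_main} apply.

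\textbf{The three cases.} We match each theorem's left-hand side to $\dist(x(t),\mathcal S)$.
\begin{itemize}[leftmargin=*]
\item \emph{$\mathcal S=\M$.} The bound \eqref{eq:known_bound_main} is already stated in terms of $\dist(\cdot,\M)$, so substituting $q_\perp(t)$ finishes this case.
\item \emph{$\mathcal S=\widehat\M_i$.} The proxy is an affine subspace $x_i^\ast+\operatorname{range}(T_i)$ with orthogonal projector $P_i$. Its nearest point to $x$ is $x_i^\ast+P_i(x-x_i^\ast)$. Hence $\dist(x,\widehat\M_i)=\|Q_i(x-x_i^\ast)\|$, and \eqref{eq:fixed_proxy_bound_main} gives the claim.
\item \emph{$\mathcal S=\proxyM$.} The distance to a finite union is the minimum of the distances to its members. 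So $\dist(x,\proxyM)=\min_i\|Q_i(x-x_i^\ast)\|=\sqrt{2V_{\mathrm{proxy}}(x)}$. Taking square roots in \eqref{eq:global_proxy_bound_main} turns $\exp(-2\int_0^t\alpha)$ into $\exp(-\int_0^t\alpha)=q_\perp(t)$.
\end{itemize}

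\textbf{Terminal time.} Evaluating the bound at $t=1-\delta$ gives $(1-t)^\beta=\delta^\beta$ and $e^{-\alpha_0(1-\delta)}$, which is the stated terminal bound.

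The only step needing any care is the proxy-union case. There one must check that the distance to the union equals $\sqrt{2V_{\mathrm{proxy}}}$, using the minimum over proxies and the orthogonal-projector identity for affine subspaces. All hypotheses (existence, staying in $\mathcal U$, well-definedness of the switching field) are inherited from the respective theorems rather than re-proved.
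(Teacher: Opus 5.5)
Your proposal is correct and follows the paper's proof: compute $\int_0^t\alpha(\tau)\,\dd\tau=\alpha_0t-\beta\log(1-t)$, substitute into the three bounds (using $\dist(x,\widehat\M_i)=\|Q_i(x-x_i^\ast)\|$ for the fixed proxy and square roots of \eqref{eq:global_proxy_bound_main} for the union), then set $t=1-\delta$. Your explicit checks of the affine nearest-point identity, and that this schedule satisfies the standing assumptions on $\alpha$, are small details the paper leaves implicit.
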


Proofs are given in Appendix~\ref{app:proofs}. Theorems~\ref{thm:known_main}--\ref{thm:global_proxy_main} guarantee geometric stability, which should not be confused with exact matching of the tangential data distribution. 
Under normal path compatibility and exact population regression, the conventional flow-matching marginalization argument applies to the resulting compatible conditional field \citep{lipman2023flowmatching}. 
Without compatibility, the normal residual in \eqref{eq:loss_decomposition} must be treated as a modeling tradeoff rather than hidden inside the training loss.

\section{Experiments}
\label{sec:experiments}

We choose three experiments that together cover both settings of MSFM: an
ellipse, an unknown curved manifold whose analytic form lets us verify
convergence; disturbed Push-T control, an unknown manifold of expert actions
approximated by local PCA proxies; and nominal Robomimic Square
manipulation, a known $\mathrm{SO}(3)$ manifold with analytical projectors.
In each case, we report geometric validity separately from task success.

\begin{figure*}[t]
    \centering
    \includegraphics[width=\textwidth]{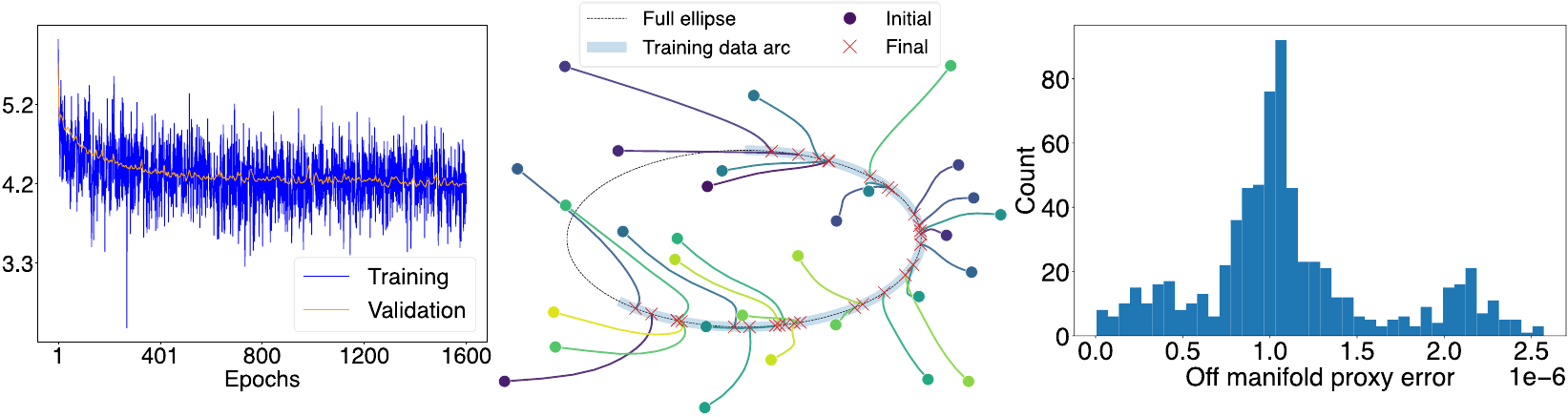}
    \caption{Ellipse: training/validation losses over 1,600 epochs, sampling trajectories, and terminal off-proxy errors for 700 samples. Here, we approximate the manifold (the ellipse) by local affine proxies using PCA.}
    \label{fig:ellipse_summary}
\end{figure*}

\subsection{Ellipse With Local Linear Proxies}
\label{sec:ellipse_main}

The target manifold is the ellipse arc $(2\cos\varphi,\sin\varphi)$, $\varphi\in[-3\pi/4,\pi/2)$, represented by one-dimensional local PCA proxies; its analytic geometry is not supplied to the model. MSFM uses a uniform ambient prior, linear training paths, and $\alpha(t)=1/(1-t)$. Across 700 generated samples, mean terminal off-proxy error is $1.10\times10^{-6}$ and mean analytic ellipse residual is $9.74\times10^{-5}$ (Figure~\ref{fig:ellipse_summary}). These results show adherence to the learned geometry, not distributional equivalence or superiority over FM. Appendix~\ref{app:ellipse} gives data, training, and diagnostic details.

\paragraph{Shared robotic-policy protocol.}
Within the following Push-T and Robomimic Square tasks, FM and MSFM share state observations, action representations, temporal U-Net capacity, Gaussian priors, and AdamW optimization. Both use validation-selected checkpoints, receding-horizon execution, and matched RK4 field-evaluation budgets with method-specific time grids. 
Our U-net has channel widths $(16,32,64)$ for Push-T, and $(32, 60, 124)$ for Robomimic Square. 

\subsection{Push-T}
\label{sec:pusht_main}

Push-T requires a planar pusher to align a T-shaped block with a target pose \citep{chi2023diffusionpolicy,huggingface_gym_pusht}. The target geometry is an unknown manifold of expert action horizons in $\R^{16}$: 8 successive 2D pusher positions. MSFM approximates this geometry by local PCA affine proxies and contracts toward the active proxy while learning tangential motion. It does not impose a manifold on the block's physical pose. Conventional FM uses unconstrained linear-path velocity regression. Both policies condition on the pusher/block state, execute the first target, and replan.

We report task performance only under repeated block-position disturbances. MSFM raises success from $74\%$ to $82\%$ and mean maximum reward from $0.86$ to $0.92$ (Figure~\ref{fig:pusht}). This improvement is consistent with transverse stabilization toward expert-supported action geometry, as shown in Fig.~\ref{fig:pusht}e as an off-manifold proxy error. 
Figure~\ref{fig:pusht}f shows a sample matched episode run in which the MSFM successfully performs the Push-T task, while the conventional FM fails. 
Appendix~\ref{app:pusht} provides additional details

\begin{figure*}[t]
    \centering
    \includegraphics[width=\textwidth]{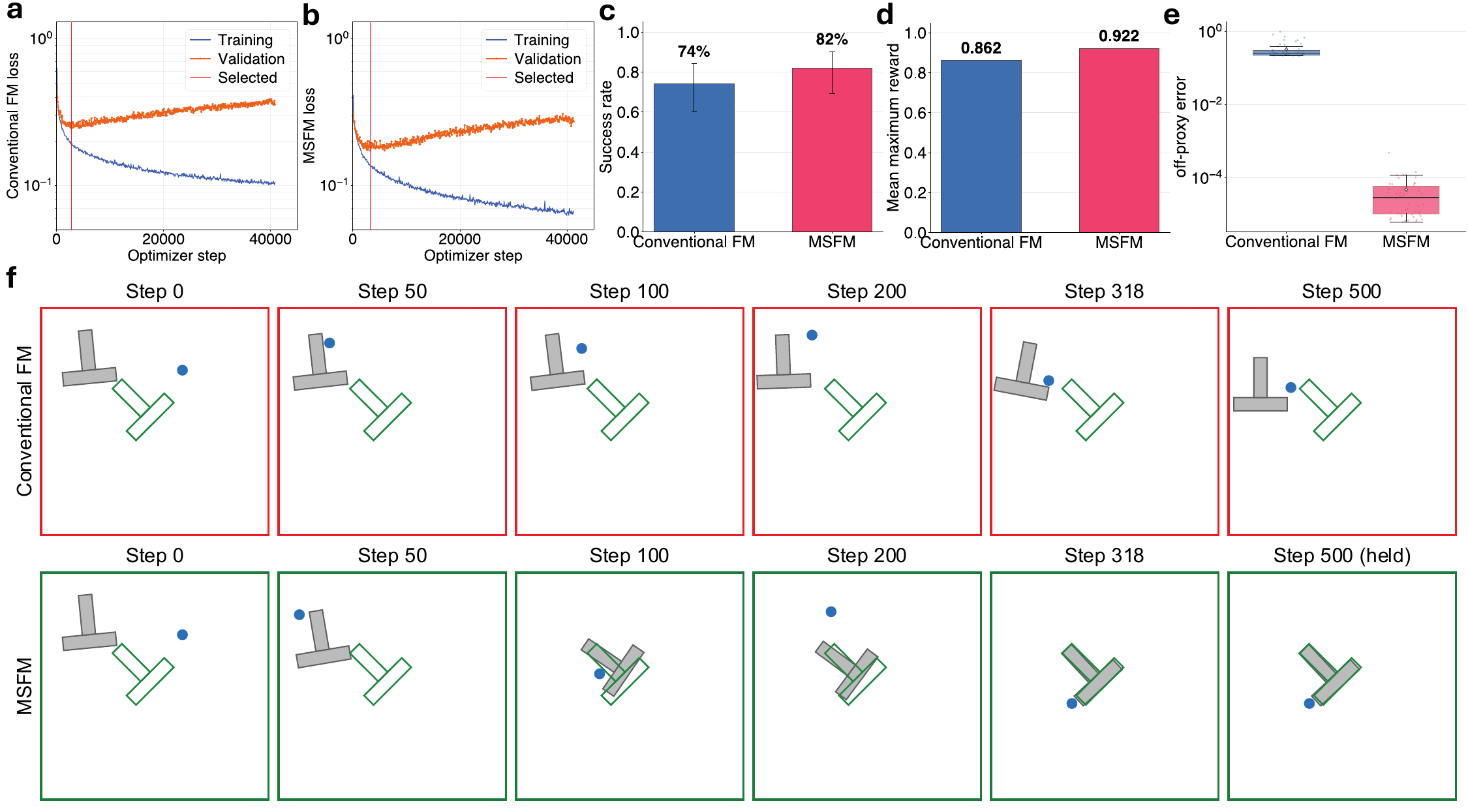}
    \caption{Push-T Task. (a,b) Total FM and parallel-only MSFM training/validation losses; red lines retain the original checkpoint selection. (c,d) Disturbed success and mean maximum reward. (e) Shows that off-manifold-proxy error is 4 orders of magnitude larger for the conventional FM. 
    (f) A matched episode run where MSFM is successful in performing the task, while the conventional FM fails.
    Here, we approximate the manifold (expert actions) by local affine proxies using PCA.}
    \label{fig:pusht}
\end{figure*}

\subsection{Robomimic Square}
\label{sec:square_main}

A Panda robot must place a square nut over its matching peg \citep{mandlekar2022robomimic,zhu2020robosuite}. The known action manifold is $(\R^3\times\SO(3)\times\R)^{64}$: 64-step horizons of translation, rotation, and gripper commands, represented in $\R^{64\times13}$. MSFM constrains the rotation blocks to approach $\SO(3)$ using the tubular path and analytical projectors; translation and gripper components remain unconstrained. Both methods encode absolute pose goals relative to the current end-effector frame, decode with the same final rotation projection, and execute four actions before replanning. FM uses ambient linear paths.

Using the best validation checkpoints, MSFM achieves $72\%$ success versus $60\%$ for FM (Figure~\ref{fig:square}). 
Panel d also shows that, given a budget for the length of an episode, MSFM achieves a higher success rate.
Mean rotation deviation before final projection falls from $10^{-2}$ to $10^{-7}$ in panel e. 
Thus, stronger $\SO(3)$ adherence accompanies higher observed success. Appendix~\ref{app:square} provides additional information.

\begin{figure*}[t]
    \centering
    \includegraphics[width=\textwidth]{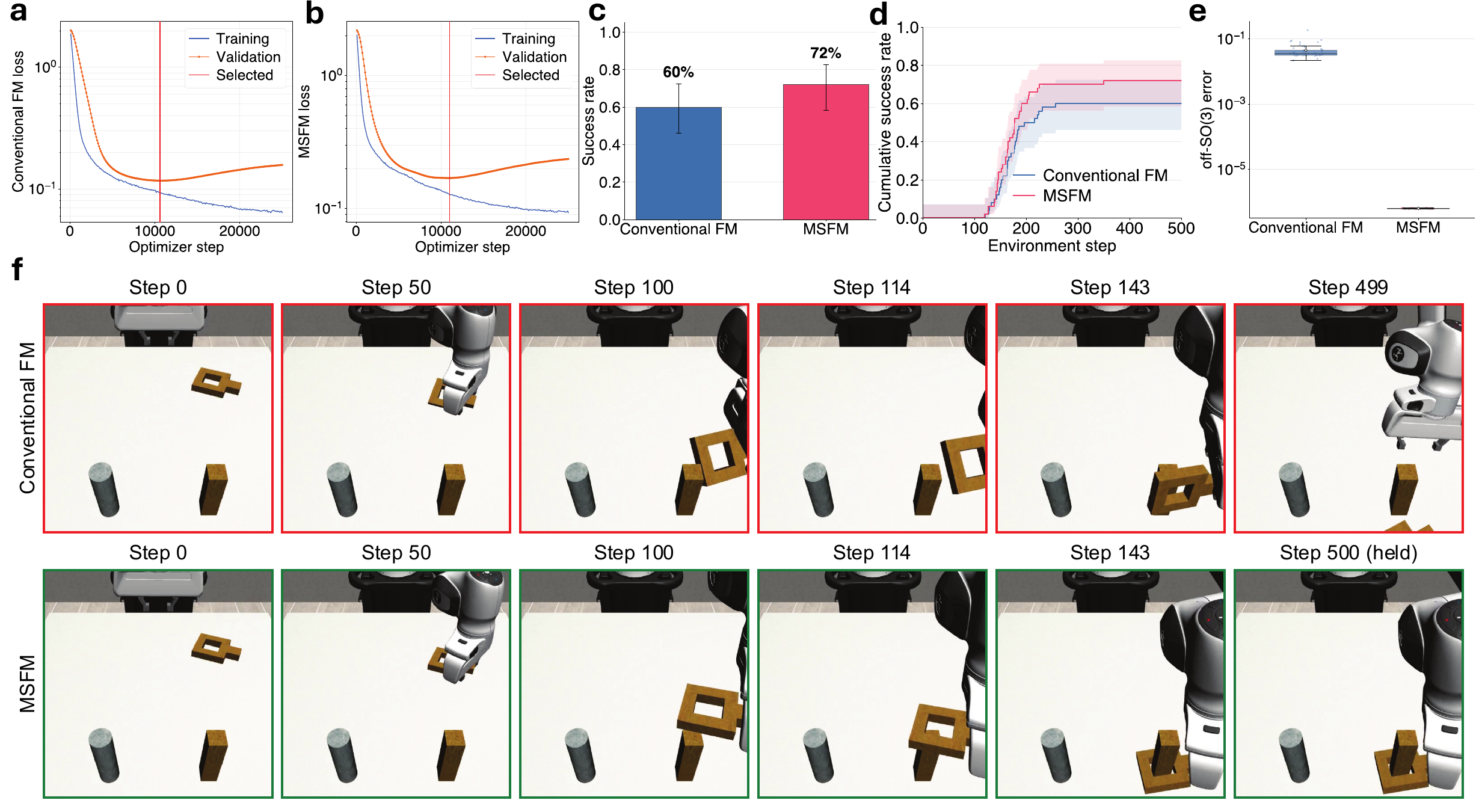}
    \caption{Robomimic Square Task. (a,b) Training/velocity-validation losses; red lines mark the checkpoint selection. (c) Success on paired nominal rollouts using best integrated-policy checkpoints.
    (d) Cumulative success rate vs. Robomimic environment time step.
    (e) Pre-projection rotation deviation, nearly five orders of magnitude lower with MSFM.
    (f) shows a completed episode from MSFM, while a conventional FM failed. 
    Here, the manifold for rotation poses is SO(3).}
    \label{fig:square}
\end{figure*}

\section{Conclusion}
\label{sec:conclusion}

We introduced manifold-stable flow matching (MSFM), which separates learned tangential transport from prescribed normal contraction. The formulation accommodates both known manifolds, through analytical projectors, and unknown geometry, through local affine proxies. We derived compatible probability paths and decomposed the regression loss into a learnable tangential term and a normal compatibility residual. Under the stated geometric and solution assumptions, we established manifold invariance and explicit transverse convergence bounds for known manifolds and fixed proxies, together with a proxy-union guarantee under minimum-residual selection. With a suitable contraction schedule, these bounds ensure vanishing normal deviation as flow time approaches its terminal value, without requiring exact learning of the tangential dynamics.

Our work complements the notion of manifold adherence discussed by \citet{pan2026muchado}: producing actions consistent with expert-supported geometry under unfamiliar observations can be more relevant to closed-loop performance than reducing validation error alone. MSFM makes attraction toward the chosen geometry an explicit property of the generative dynamics. In Push-T, this geometry is estimated from demonstrated action horizons; in Square, it includes the analytical rotation manifold $\SO(3)$. The stronger geometric adherence and higher observed task performance of MSFM are consistent with the importance of manifold adherence.

The absolute success rates in Push-T and Square are below those reported
for state-of-the-art generative policies (e.g., Chi et al., 2023). This gap
reflects our experimental design rather than a limitation of MSFM. Our goal
is a controlled comparison: FM and MSFM share identical temporal U-Net
backbones, conditioning, training budgets, and field-evaluation counts, so
that observed differences can be attributed to the manifold-stable
construction rather than to architecture or tuning. To this end, we use
deliberately compact networks (156K parameters for Push-T and 859K for
Square), substantially smaller than the backbones typically used in
diffusion-policy benchmarks. In addition, Push-T is evaluated under
repeated block disturbances, a protocol that differs from standard
benchmarks, so absolute numbers are not directly comparable. Because MSFM
modifies only the output field through projection and prescribed normal
contraction, it is agnostic to the backbone and can be combined with larger
architectures and longer training. Evaluating whether its geometric
advantages persist at state-of-the-art scale is left to future work.

{The experiments show geometric adherence in ellipse sampling, higher observed success and reward under Push-T disturbances, and improved Square success with substantially smaller rotation deviations. Together, these results support transverse stabilization as a useful complement to learned generative transport.}

\subsection*{AI use statement}
Generative AI assisted with language editing and final code polishing, optimizing readability and architectural consistency. 
We take responsibility for the final content of this work, including text, claims, and all numerical results.



\subsection*{Reproducibility statement}
Appendix~\ref{app:lie_groups} provides a special rotation-matrix formulation. 
Appendix~\ref{app:compatible_paths} discusses path derivations.
Appendix~\ref{app:proofs} gives proofs, and  Appendix~\ref{app:algorithms} provides more information on the algorithms. Appendices~\ref{app:ellipse}, \ref{app:pusht}, and~\ref{app:square} document data, optimization, checkpoint selection, and solvers for our numerical experiments.

\bibliography{ref.bib}

@inproceedings{lipman2023flowmatching,
title={Flow Matching for Generative Modeling},
author={Yaron Lipman and Ricky T. Q. Chen and Heli Ben-Hamu and Maximilian Nickel and Matthew Le},
booktitle={The Eleventh International Conference on Learning Representations },
year={2023},
url={https://openreview.net/forum?id=PqvMRDCJT9t}
}

@article{albergo2025stochastic,
  title   = {Stochastic Interpolants: A Unifying Framework for Flows and Diffusions},
  author  = {Albergo, Michael and Boffi, Nicholas M. and Vanden-Eijnden, Eric},
  journal = {Journal of Machine Learning Research},
  volume  = {26},
  number  = {209},
  pages   = {1--80},
  year    = {2025}
}

@article{ding2025rfmp,
  author={Ding, Haoran and Jaquier, Noémie and Peters, Jan and Rozo, Leonel},
  journal={IEEE Transactions on Robotics}, 
  title={Fast and Robust Visuomotor Riemannian Flow Matching Policy}, 
  year={2025},
  volume={41},
  number={},
  pages={5327-5343}
}

@inproceedings{braun2024rfmp,
  title={Riemannian flow matching policy for robot motion learning},
  author={Braun, Max and Jaquier, No{\'e}mie and Rozo, Leonel and Asfour, Tamim},
  booktitle={2024 IEEE/RSJ International Conference on Intelligent Robots and Systems (IROS)},
  pages={5144--5151},
  year={2024},
  organization={IEEE}
}

@inproceedings{sprague2024stablefm,
title={Incorporating Stability Into Flow Matching},
author={Christopher Iliffe Sprague and Arne Elofsson and Hossein Azizpour},
booktitle={ICML 2024 Workshop on Structured Probabilistic Inference {\&} Generative Modeling},
year={2024},
}

@inproceedings{beik2024ncds,
 author = {Beik Mohammadi, Hadi and Hauberg, S\o ren and Arvanitidis, Georgios and Figueroa, Nadia and Neumann, Gerhard and Rozo, Leonel},
 booktitle = {International Conference on Learning Representations},
 pages = {49097--49120},
 title = {Neural Contractive Dynamical Systems},
 volume = {2024},
 year = {2024}
}

@inproceedings{jaffe2024elcd,
 author = {Jaffe, Sean and Davydov, Alexander and Lapsekili, Deniz and Singh, Ambuj K. and Bullo, Francesco},
 booktitle = {Advances in Neural Information Processing Systems},
 pages = {66204--66225},
 publisher = {Curran Associates, Inc.},
 title = {Learning Neural Contracting Dynamics: Extended Linearization and Global Guarantees},
 volume = {37},
 year = {2024}
}

@article{lohmiller1998contraction,
  title   = {On Contraction Analysis for Non-linear Systems},
  author  = {Lohmiller, Winfried and Slotine, Jean-Jacques E.},
  journal = {Automatica},
  volume  = {34},
  number  = {6},
  pages   = {683--696},
  year    = {1998},
  doi     = {10.1016/S0005-1098(98)00019-3}
}

@article{tsukamoto2021tutorial,
  title   = {Contraction Theory for Nonlinear Stability Analysis and Learning-Based Control: A Tutorial Overview},
  author  = {Tsukamoto, Hiroyasu and Chung, Soon-Jo and Slotine, Jean-Jacques E.},
  journal = {Annual Reviews in Control},
  volume  = {52},
  pages   = {135--169},
  year    = {2021},
}

@inproceedings{pan2026muchado,
 author = {Pan, Chaoyi and Anantharaman, Giridharan and Huang, Nai-Chieh and Jin, Claire and Pfrommer, Daniel and Yuan, Chenyang and Permenter, Frank and Qu, Guannan and Boffi, Nicholas and Shi, Guanya and Simchowitz, Max},
 booktitle = {International Conference on Learning Representations},
 pages = {90575--90614},
 title = {Much Ado About Noising: Dispelling the Myths of Generative Robotic Control},
 volume = {2026},
 year = {2026}
}

@article{chi2023diffusionpolicy,
  title={Diffusion policy: Visuomotor policy learning via action diffusion},
  author={Chi, Cheng and Xu, Zhenjia and Feng, Siyuan and Cousineau, Eric and Du, Yilun and Burchfiel, Benjamin and Tedrake, Russ and Song, Shuran},
  journal={The International Journal of Robotics Research},
  volume={44},
  number={10-11},
  pages={1684--1704},
  year={2025},
  publisher={Sage Publications Sage UK: London, England}
}

@Book{FB-CTDS,
  author =    {F. Bullo},
  title =     {Contraction Theory for Dynamical Systems},
  year =      2026,
  edition =   {{1.3}},
  publisher = {Kindle Direct Publishing},
  ISBN =      {979-8836646806},
  url =       {https://fbullo.github.io/ctds},
}

@article{wang2005partial,
  title={On partial contraction analysis for coupled nonlinear oscillators},
  author={Wang, Wei and Slotine, Jean-Jacques E},
  journal={Biological cybernetics},
  volume={92},
  number={1},
  pages={38--53},
  year={2005},
  publisher={Springer}
}

@article{slotine2003modular,
  title={Modular stability tools for distributed computation and control},
  author={Slotine, Jean-Jacques E},
  journal={International Journal of Adaptive Control and Signal Processing},
  volume={17},
  number={6},
  pages={397--416},
  year={2003},
  publisher={Wiley Online Library}
}

@article{slotine2001modularity,
title = {Modularity, evolution, and the binding problem: a view from stability theory},
journal = {Neural Networks},
volume = {14},
number = {2},
pages = {137-145},
year = {2001},
issn = {0893-6080},
author = {J.-J.E. Slotine and W. Lohmiller},
}

@article{lohmiller2000nonlinear,
  title={Nonlinear process control using contraction theory},
  author={Lohmiller, Winfried and Slotine, Jean-Jacques E},
  journal={AIChE journal},
  volume={46},
  number={3},
  pages={588--596},
  year={2000},
  publisher={Wiley Online Library}
}

@misc{huggingface_gym_pusht,
  title        = {{gym-pusht}: A Gymnasium Environment for Push-T},
  author       = {{Hugging Face}},
  year         = {2024},
  howpublished = {\url{https://github.com/huggingface/gym-pusht}},
  note         = {Accessed 2026-07-08}
}

@article{nazerian2024bridging,
  title={Bridging the Gap between Reactivity, Contraction, and Finite-Time Lyapunov Exponents},
  author={Nazerian, Amirhossein and Sorrentino, Francesco and Aminzare, Zahra},
  journal={arXiv preprint arXiv:2410.23435},
  year={2024}
}

@inproceedings{mandlekar2022robomimic,
  title = 	 {What Matters in Learning from Offline Human Demonstrations for Robot Manipulation},
  author =       {Mandlekar, Ajay and Xu, Danfei and Wong, Josiah and Nasiriany, Soroush and Wang, Chen and Kulkarni, Rohun and Fei-Fei, Li and Savarese, Silvio and Zhu, Yuke and Mart\'in-Mart\'in, Roberto},
  booktitle = 	 {Proceedings of the 5th Conference on Robot Learning},
  pages = 	 {1678--1690},
  year = 	 {2022},
  volume = 	 {164},
  series = 	 {Proceedings of Machine Learning Research},
  month = 	 {08--11 Nov},
  publisher =    {PMLR},
}

@article{zhu2020robosuite,
  title={robosuite: A Modular Simulation Framework and Benchmark for Robot Learning},
  author={Zhu, Yuke and Wong, Josiah and Mandlekar, Ajay and Martín-Martín, Roberto and Joshi, Abhishek and Lin, Kevin and Maddukuri, Abhiram and Nasiriany, Soroush and Zhu, Yifeng},
  journal={arXiv preprint arXiv:2009.12293v3},
  year={2020}
}

@inproceedings{zaghen2026riemannian,
 author = {Zaghen, Olga and Eijkelboom, Floor and Pouplin, Alison and Liu, Cong and Welling, Max and van de Meent, Jan-Willem and Bekkers, Erik},
 booktitle = {International Conference on Learning Representations},
 pages = {34243--34286},
 title = {Riemannian Variational Flow Matching for Material and Protein Design},
 volume = {2026},
 year = {2026}
}

@inproceedings{zaghen2025towards,
title={Towards Variational Flow Matching on General Geometries},
author={Olga Zaghen and Floor Eijkelboom and Alison Pouplin and Erik J Bekkers},
booktitle={ICLR 2025 Workshop on Deep Generative Model in Machine Learning: Theory, Principle and Efficacy},
year={2025},
}

@inproceedings{
zhong2026riemannian,
title={Riemannian MeanFlow for One-Step Generation on Manifolds},
author={Zichen Zhong and Haoliang Sun and Yukun Zhao and Yongshun Gong and Yilong Yin},
booktitle={Forty-third International Conference on Machine Learning},
year={2026},
url={https://openreview.net/forum?id=DeOm4Axr9W}
}

@inproceedings{de2024pullback,
title={Pullback Flow Matching on Data Manifolds},
author={Friso de Kruiff and Erik J Bekkers and Ozan {\"O}ktem and Carola-Bibiane Sch{\"o}nlieb and Willem Diepeveen},
booktitle={ICML 2025 Generative AI and Biology (GenBio) Workshop},
year={2025}
}

@inproceedings{kapusniak2024metric,
 author = {Kapu\'{s}niak, Kacper and Potaptchik, Peter and Reu, Teodora and Zhang, Leo and Tong, Alexander and Bronstein, Michael and Bose, Avishek Joey and Di Giovanni, Francesco},
 booktitle = {Advances in Neural Information Processing Systems},
 pages = {135011--135042},
 publisher = {Curran Associates, Inc.},
 title = {Metric Flow Matching for Smooth Interpolations on the Data Manifold},
 volume = {37},
 year = {2024}
}

@inproceedings{atanackovic2025meta,
  title={Meta flow matching: Integrating vector fields on the wasserstein manifold},
  author={Atanackovic, Lazar and Zhang, Xi Nicole and Amos, Brandon and Blanchette, Mathieu and Lee, Leo J and Bengio, Yoshua and Tong, Alexander and Neklyudov, Kirill},
  booktitle={International Conference on Learning Representations},
  volume={2025},
  pages={94586--94610},
  year={2025}
}

@article{aminzare2013logarithmic,
  title={Logarithmic Lipschitz norms and diffusion-induced instability},
  author={Aminzare, Zahra and Sontag, Eduardo D},
  journal={Nonlinear Analysis: Theory, Methods \& Applications},
  volume={83},
  pages={31--49},
  year={2013},
  publisher={Elsevier}
}

@inproceedings{aminzare2014contraction,
  title={Contraction methods for nonlinear systems: A brief introduction and some open problems},
  author={Aminzare, Zahra and Sontag, Eduardo D},
  booktitle={53rd IEEE Conference on Decision and Control},
  pages={3835--3847},
  year={2014},
  organization={IEEE}
}

@inproceedings{bamberger2026carre,
  title={Carr{\'e} du champ flow matching: better quality-generalisation tradeoff in generative models},
  author={Bamberger, Jacob and Jones, Iolo and Duncan, Dennis and Bronstein, Michael and Vandergheynst, Pierre and Gosztolai, Adam},
  booktitle={International Conference on Learning Representations},
  volume={2026},
  pages={135754--135779},
  year={2026}
}

@inproceedings{jiang2026bures,
 author = {Jiang, Keyue and Cui, Jiahao and Dong, Xiaowen and Toni, Laura},
 booktitle = {International Conference on Learning Representations},
 pages = {140522--140560},
 title = {Bures-Wasserstein Flow Matching for Graph Generation},
 volume = {2026},
 year = {2026}
}

@inproceedings{li2026gauge,
title={Gauge Flow Matching: Efficient Constrained Generative Modeling over General Convex Set and Beyond},
author={Xinpeng Li and Enming Liang and Minghua Chen},
booktitle={The Fourteenth International Conference on Learning Representations},
year={2026},
url={https://openreview.net/forum?id=vxq1OnaAMq}
}

@inproceedings{ICLR2026_d33a2d61,
 author = {Guan, Yunrui and Balasubramanian, Krishna and Ma, Shiqian},
 booktitle = {International Conference on Learning Representations},
 pages = {130098--130124},
 title = {Mirror Flow Matching with Heavy-Tailed Priors for Generative Modeling on Convex Domains},
 volume = {2026},
 year = {2026}
}

@inproceedings{ICLR2026_58b1b19b,
 author = {Yang, Jeongyong and Jang, Seunghwan and Han, SooJean},
 booktitle = {International Conference on Learning Representations},
 pages = {54058--54087},
 title = {SafeFlowMatcher: Safe and Fast Planning using Flow Matching with Control Barrier Functions},
 volume = {2026},
 year = {2026}
}

@inproceedings{ICLR2026_238e6167,
 author = {Wyrwal, Kacper and Ceylan, Ismail I and Tong, Alexander},
 booktitle = {International Conference on Learning Representations},
 pages = {21075--21100},
 title = {Topological Flow Matching},
 volume = {2026},
 year = {2026}
}

@inproceedings{ICLR2026_a57483b3,
 author = {Baldan, Giacomo and Liu, Qiang and Guardone, Alberto and Thuerey, Nils},
 booktitle = {International Conference on Learning Representations},
 pages = {101215--101243},
 title = {Physics vs Distributions: Pareto Optimal Flow Matching with Physics Constraints},
 volume = {2026},
 year = {2026}
}

@inproceedings{ICLR2026_e8b782d0,
 author = {Tauberschmidt, Jan and Fellenz, Sophie and Vollmer, Sebastian and Duncan, Andrew},
 booktitle = {International Conference on Learning Representations},
 editor = {C. Vondrick and B. Hariharan and C. Raffel and L. Pinto and D. Yang and A. Faust},
 pages = {143992--144037},
 title = {Physics-Constrained Fine-Tuning of Flow-Matching Models for Generation and Inverse Problems},
 volume = {2026},
 year = {2026}
}

@misc{kumar2026flow,
      title={Flow Matching is Adaptive to Manifold Structures}, 
      author={Shivam Kumar and Yixin Wang and Lizhen Lin},
      year={2026},
      eprint={2602.22486},
      archivePrefix={arXiv},
      primaryClass={stat.ML},
      url={https://arxiv.org/abs/2602.22486}, 
}

@inproceedings{habashy2026geodesic,
title={Geodesic Flow Matching for Denoising High-Dimensional Structured Representations},
author={Karim Habashy and Chris Eliasmith},
booktitle={Forty-third International Conference on Machine Learning},
year={2026},
url={https://openreview.net/forum?id=a9CuW1f0CT}
}

@inproceedings{cai2026improvingclassifierfreeguidanceflow,
title={Improving Classifier-Free Guidance of Flow Matching via Manifold Projection},
author={Jian-Feng Cai and Haixia Liu and Zhengyi Su and Chao Wang},
booktitle={Forty-third International Conference on Machine Learning},
year={2026},
url={https://openreview.net/forum?id=IPp3LD6u16}
}

@inproceedings{wu2026flow,
title={Flow for Future: Geometric {SE}(3)-Equivariant Flow Matching for 3D Trajectory Prediction},
author={Junwei Wu and Yihang Liu and Ruixuan Yu and Jian Sun},
booktitle={Forty-third International Conference on Machine Learning},
year={2026},
}

@inproceedings{ryu2024diffusionedfs,
  author={Ryu, Hyunwoo and Kim, Jiwoo and An, Hyunseok and Chang, Junwoo and Seo, Joohwan and Kim, Taehan and Kim, Yubin and Hwang, Chaewon and Choi, Jongeun and Horowitz, Roberto},
  booktitle={2024 IEEE/CVF Conference on Computer Vision and Pattern Recognition (CVPR)}, 
  title={Diffusion-EDFs: Bi-Equivariant Denoising Generative Modeling on SE(3) for Visual Robotic Manipulation}, 
  year={2024},
  volume={},
  number={},
  pages={18007-18018},
}

@InProceedings{chatzipantazis2025stride,
  title = 	 {STRiDE: STate-space Riemannian Diffusion for Equivariant Planning},
  author =       {Chatzipantazis, Evangelos and Rao, Nishanth and Daniilidis, Kostas},
  booktitle = 	 {Proceedings of the 7th Annual Learning for Dynamics$\backslash$\& Control Conference},
  pages = 	 {1338--1352},
  year = 	 {2025},
  volume = 	 {283},
  series = 	 {Proceedings of Machine Learning Research},
  month = 	 {04--06 Jun},
  publisher =    {PMLR},
}

@inproceedings{tie2025etseed,
 author = {Tie, Chenrui and Chen, Yue and Wu, Ruihai and Dong, Boxuan and Li, Zeyi and Gao, Chongkai and Dong, Hao},
 booktitle = {International Conference on Learning Representations},
 pages = {60114--60132},
 title = {ET-SEED: EFFICIENT TRAJECTORY-LEVEL SE(3) EQUIVARIANT DIFFUSION POLICY},
 volume = {2025},
 year = {2025}
}

@inproceedings{wang2025equivariantpolicy,
  title = 	 {Equivariant Diffusion Policy},
  author =       {Wang, Dian and Hart, Stephen and Surovik, David and Kelestemur, Tarik and Huang, Haojie and Zhao, Haibo and Yeatman, Mark and Wang, Jiuguang and Walters, Robin and Platt, Robert},
  booktitle = 	 {Proceedings of The 8th Conference on Robot Learning},
  pages = 	 {48--69},
  year = 	 {2025},
  volume = 	 {270},
  series = 	 {Proceedings of Machine Learning Research},
  month = 	 {06--09 Nov},
  publisher =    {PMLR},
}

@inproceedings{yang2025equibot,
  title = 	 {EquiBot: SIM(3)-Equivariant Diffusion Policy for Generalizable and Data Efficient Learning},
  author =       {Yang, Jingyun and Cao, Ziang and Deng, Congyue and Antonova, Rika and Song, Shuran and Bohg, Jeannette},
  booktitle = 	 {Proceedings of The 8th Conference on Robot Learning},
  pages = 	 {1048--1068},
  year = 	 {2025},
  volume = 	 {270},
  series = 	 {Proceedings of Machine Learning Research},
  month = 	 {06--09 Nov},
  publisher =    {PMLR},
}

@inproceedings{zhu2025spherical,
  title = 	 {{SE}(3)-Equivariant Diffusion Policy in Spherical {F}ourier Space},
  author =       {Zhu, Xupeng and Wang, Fan and Walters, Robin and Shi, Jane},
  booktitle = 	 {Proceedings of the 42nd International Conference on Machine Learning},
  pages = 	 {80187--80206},
  year = 	 {2025},
  volume = 	 {267},
  series = 	 {Proceedings of Machine Learning Research},
  month = 	 {13--19 Jul},
  publisher =    {PMLR},
}

@inproceedings{bouvier2025ddat,
    AUTHOR    = {Jean-Baptiste Bouvier AND Kanghyun Ryu AND Qiayuan Liao AND Koushil Sreenath AND Negar Mehr}, 
    TITLE     = {{DDAT: Diffusion Policies Enforcing Dynamically Admissible Robot Trajectories}}, 
    BOOKTITLE = {Proceedings of Robotics: Science and Systems}, 
    YEAR      = {2025}, 
    ADDRESS   = {LosAngeles, CA, USA}, 
    MONTH     = {June}
}

@misc{foland2025kinematic,
      title={How Well do Diffusion Policies Learn Kinematic Constraint Manifolds?}, 
      author={Lexi Foland and Thomas Cohn and Adam Wei and Nicholas Pfaff and Boyuan Chen and Russ Tedrake},
      year={2025},
      eprint={2510.01404},
      archivePrefix={arXiv},
      primaryClass={cs.RO},
      url={https://arxiv.org/abs/2510.01404}, 
}

@InProceedings{chisari2025pointflowmatch,
  title = 	 {Learning Robotic Manipulation Policies from Point Clouds with Conditional Flow Matching},
  author =       {Chisari, Eugenio and Heppert, Nick and Argus, Max and Welschehold, Tim and Brox, Thomas and Valada, Abhinav},
  booktitle = 	 {Proceedings of The 8th Conference on Robot Learning},
  pages = 	 {982--993},
  year = 	 {2025},
  volume = 	 {270},
  series = 	 {Proceedings of Machine Learning Research},
  month = 	 {06--09 Nov},
  publisher =    {PMLR},
}

@inproceedings{zhang2026e3flow,
    author    = {Zhang, Qinglun and Cheng, Shen and Dan, Tian and Fan, Haoqiang and Liu, Guanghui and Liu, Shuaicheng},
    title     = {Efficient Hybrid SE(3)-Equivariant Visuomotor Flow Policy via Spherical Harmonics for Robot Manipulation},
    booktitle = {Proceedings of the IEEE/CVF Conference on Computer Vision and Pattern Recognition (CVPR)},
    month     = {June},
    year      = {2026},
    pages     = {27989-27998}
}

@inproceedings{yan2025maniflow,
  title = 	 {ManiFlow: A General Robot Manipulation Policy via Consistency Flow Training},
  author =       {Yan, Ge and Zhu, Jiyue and Deng, Yuquan and Yang, Shiqi and Qiu, Ri-Zhao and Cheng, Xuxin and Memmel, Marius and Krishna, Ranjay and Goyal, Ankit and Wang, Xiaolong and Fox, Dieter},
  booktitle = 	 {Proceedings of The 9th Conference on Robot Learning},
  pages = 	 {2268--2293},
  year = 	 {2025},
  volume = 	 {305},
  series = 	 {Proceedings of Machine Learning Research},
  month = 	 {27--30 Sep},
  publisher =    {PMLR},
}

@inproceedings{ouyang2026rfmpose,
 author = {Ouyang, Wenzhe and Ye, Qi and Wang, Jinghua and Xu, Zenglin and Chen, Jiming},
 booktitle = {Advances in Neural Information Processing Systems},
 pages = {67591--67610},
 publisher = {Curran Associates, Inc.},
 title = {RFMPose: Generative Category-level Object Pose Estimation via Riemannian Flow Matching},
 volume = {38, Main Conference},
 year = {2025}
}

@article{balcerak2026energy,
  title={Energy matching: Unifying flow matching and energy-based models for generative modeling},
  author={Balcerak, Michal and Amiranashvili, Tamaz and Terpin, Antonio and Shit, Suprosanna and Bogensperger, Lea and Kaltenbach, Sebastian and Koumoutsakos, Petros and Menze, Bjoern},
  journal={Advances in Neural Information Processing Systems},
  volume={38},
  pages={8583--8609},
  year={2026}
}

@inproceedings{chen2023riemannian,
 author = {Chen, Ricky T. Q. and Lipman, Yaron},
 booktitle = {International Conference on Learning Representations},
 pages = {47922--47945},
 title = {Flow Matching on General Geometries},
 volume = {2024},
 year = {2024}
}

@inproceedings{pi2026learning,
title={Learning Manifold Data with Flow Matching},
author={Sophia Pi and Mingcheng Lu and Jerry Yao-Chieh Hu and Maojiang Su and Weimin Wu and Han Liu},
booktitle={ICML 2026 Workshop on Foundations of Deep Generative Models: Understanding Memorization, Generalization, and Reasoning},
year={2026}
}
\bibliographystyle{iclr2027_conference}

\clearpage
\appendix

\section{Rotation-Matrix Specialization}
\label{app:lie_groups}

For an ambient matrix $X\in\R^{3\times3}$, let $R=\Pi_{\SO(3)}(X)$ be the nearest proper rotation. For a matrix direction $Z$, the analytical projectors are
\begin{align*}
    P_R(Z)&=R\skewop(R^\top Z),\\
    Q_R(Z)&=R\symop(R^\top Z).
\end{align*}
Here $\skewop(B)=(B-B^\top)/2$ and $\symop(B)=(B+B^\top)/2$, with orthogonality measured by the Frobenius inner product. 
The ambient MSFM specialization is
\begin{equation*}
    \dot X=P_R(W_\theta(t,X))-\alpha(t)(X-R),
    \qquad R=\Pi_{\SO(3)}(X).
\end{equation*}
Here $W_\theta$ denotes the network's $3\times3$ matrix output. Theorem~\ref{thm:known_main} applies wherever the nearest-rotation projection is smooth, and the solution remains in its domain. Square uses this field for each rotation block; the translation and gripper components are unconstrained.

\paragraph{Explicit conditional path.}
For an ambient source matrix $X_0$ and target rotation $R_1$, set $R_0=\Pi_{\SO(3)}(X_0)$, $S_0=R_0^\top(X_0-R_0)$, and $\Omega=\log(R_0^\top R_1)$, using a chosen rotation-logarithm branch. The matrix $S_0$ is symmetric and $\Omega$ is skew-symmetric. Along the geodesic $R_t=R_0e^{t\Omega}$, the solution of \eqref{eq:normal_path_ode_main} is
\begin{equation}
    r_t=q_\perp(t)R_0e^{t\Omega/2}S_0e^{t\Omega/2},
    \qquad X_t=R_t+r_t.
    \label{eq:so3_normal_path_appendix}
\end{equation}
Indeed, $R_t^\top r_t=q_\perp(t)e^{-t\Omega/2}S_0e^{t\Omega/2}$ is symmetric, and differentiation gives $Q_{R_t}\dot r_t=-\alpha(t)r_t$. Differentiating $Q_{R_t}r_t=r_t$ then recovers the full projector ODE. The training velocity is $U_t=\dot X_t$, evaluated analytically; no auxiliary numerical ODE solve is needed for this geometry. Compatibility with the sampling field requires $\Pi_{\SO(3)}(X_t)=R_t$, as in the tubular-neighborhood assumptions.

\section{Compatible Probability Paths in Tubular Coordinates}
\label{app:compatible_paths}

\begin{lemma}[Normality and compatibility of tubular paths]
\label{lem:compatible_paths}
Let $y:[0,1)\to\M$ be a continuously differentiable base curve whose orthogonal normal projector $Q_t=Q_{y_t}$ is continuously differentiable, and set $P_t=\Id-Q_t$. Let $\alpha:[0,1)\to(0,\infty)$ be locally integrable and $r_0\in N_{y_0}\M$. Then
\begin{equation*}
    \dot r_t=\dot Q_t r_t-\alpha(t)r_t,\qquad r(0)=r_0,
\end{equation*}
has a unique locally absolutely continuous solution. For every $t<1$,
\begin{equation*}
    Q_t r_t=r_t,\qquad
    \|r_t\|=q_\perp(t)\|r_0\|,\qquad
    q_\perp(t)=\exp\!\left(-\int_0^t\alpha(\tau)\,\dd\tau\right).
\end{equation*}
For $x_t=y_t+r_t$, its velocity $u_t=\dot x_t$ satisfies, almost everywhere,
\begin{align}
    P_tu_t&=\dot y_t+\dot Q_t r_t,
    \label{eq:tubular_tangent_velocity_appendix}\\
    Q_tu_t&=-\alpha(t)r_t. \notag
\end{align}
If, in addition, $x_t\in\mathcal U$ and $\Pi(x_t)=y_t$ for all $t<1$, then $\dist(x_t,\M)=q_\perp(t)\|r_0\|$, and the path has the same normal velocity as the sampling field \eqref{eq:known_vector_field}. Finally, if $y_t\to x_1\in\M$ and $\int_0^1\alpha(t)\,\dd t=\infty$, then $x_t\to x_1$ as $t\to1^-$.
\end{lemma}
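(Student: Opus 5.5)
Existence and uniqueness come first. The ODE $\dot r_t=(\dot Q_t-\alpha(t)\Id)r_t$ is linear with coefficient matrix $A(t)=\dot Q_t-\alpha(t)\Id$. This coefficient is locally integrable on $[0,1)$, because $\dot Q_t$ is continuous and $\alpha$ is locally integrable. Carathéodory theory for linear systems, via Picard iteration with Grönwall's inequality on each compact $[0,T]$ with $T<1$, then gives a unique locally absolutely continuous solution on $[0,1)$.

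\textbf{Normality.} Let $e_t=P_tr_t=r_t-Q_tr_t$; the goal is $e_t\equiv0$. Two projector identities are needed. Differentiating $Q_t^2=Q_t$ gives $\dot Q_tQ_t+Q_t\dot Q_t=\dot Q_t$, hence $Q_t\dot Q_tQ_t=0$ and $P_t\dot Q_tP_t=0$. Then
\begin{equation*}
\dot e_t=-\dot Q_tr_t+P_t\dot r_t=-\dot Q_tr_t+P_t\dot Q_tr_t-\alpha(t)e_t=-Q_t\dot Q_tr_t-\alpha(t)e_t .
\end{equation*}
Split $r_t=Q_tr_t+e_t$. The first piece contributes $Q_t\dot Q_tQ_tr_t=0$, leaving $Q_t\dot Q_te_t$. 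So $e$ solves the linear ODE
\begin{equation*}
\dot e_t=-(Q_t\dot Q_t+\alpha(t)\Id)e_t,\qquad e_0=P_0r_0=0,
\end{equation*}
since $r_0\in N_{y_0}\M$. Uniqueness for this linear ODE forces $e_t\equiv0$, i.e.\ $Q_tr_t=r_t$. This step is the main subtlety, because a naive derivative of $P_tr_t$ does not visibly vanish. The projector identity is what closes it into a homogeneous equation in $e$.

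\textbf{Norm.} Differentiate $\|r_t\|^2$:
\begin{equation*}
\tfrac{\dd}{\dd t}\|r_t\|^2=2r_t^\top\dot Q_tr_t-2\alpha\|r_t\|^2 .
\end{equation*}
Since $r_t=Q_tr_t$, we have $r_t^\top\dot Q_tr_t=r_t^\top Q_t\dot Q_tQ_tr_t=0$. Hence $\|r_t\|^2=e^{-2\int_0^t\alpha}\|r_0\|^2$, which gives $\|r_t\|=q_\perp(t)\|r_0\|$.

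\textbf{Velocity components.} Write $u_t=\dot y_t+\dot Q_tr_t-\alpha r_t$. The tangent vector $\dot y_t$ lies in $T_{y_t}\M$, so $P_t\dot y_t=\dot y_t$ and $Q_t\dot y_t=0$. Also $\alpha r_t$ is normal. For the $\dot Q_t$ term, use $\dot Q_t=\dot Q_tQ_t+Q_t\dot Q_t$ together with $Q_t\dot Q_tQ_t=0$: then $\dot Q_tr_t=\dot Q_tQ_tr_t$, and $Q_t\dot Q_tQ_tr_t=0$, so $\dot Q_tr_t$ is tangent. Therefore $P_tu_t=\dot y_t+\dot Q_tr_t$ and $Q_tu_t=-\alpha(t)r_t$, both holding almost everywhere.

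\textbf{Consequences.} Suppose $\Pi(x_t)=y_t$. Then $r(x_t)=x_t-y_t=r_t$, so $\dist(x_t,\M)=\|r_t\|=q_\perp(t)\|r_0\|$. The normal velocity $Q_tu_t=-\alpha(t)r(x_t)$ equals the normal part of the sampling field \eqref{eq:known_vector_field}, since $Q_{y}P_yw_\theta=0$. Finally, $\int_0^1\alpha=\infty$ gives $q_\perp(t)\to0$. Combined with $y_t\to x_1$, this yields
\begin{equation*}
\|x_t-x_1\|\le\|y_t-x_1\|+q_\perp(t)\|r_0\|\to0 .
\end{equation*}
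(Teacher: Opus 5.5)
Your proposal is correct and matches the paper's proof step for step. Both arguments use the same normality defect $(\Id-Q_t)r_t$, closed into the homogeneous linear ODE $\dot e_t=-(Q_t\dot Q_t+\alpha(t)\Id)e_t$ via $Q_t\dot Q_tQ_t=0$, and then the same tangency of $\dot Q_tr_t$, norm decay, velocity splitting, and triangle-inequality limit.
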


\begin{proof}
On every compact interval $[0,T]\subset[0,1)$, the coefficient $\dot Q_t-\alpha(t)\Id$ is integrable. Standard existence and uniqueness for linear ODEs therefore give a unique absolutely continuous solution; uniqueness makes these solutions agree on overlapping intervals.

Differentiating $Q_t^2=Q_t$ and multiplying the resulting identity on the left and right by $Q_t$ gives
\begin{equation*}
    \dot Q_tQ_t+Q_t\dot Q_t=\dot Q_t,
    \qquad Q_t\dot Q_tQ_t=0.
\end{equation*}
Define the normality defect $z_t=(\Id-Q_t)r_t$. Almost everywhere,
\begin{equation*}
    \dot z_t=-Q_t\dot Q_t r_t-\alpha(t)z_t
    =-\bigl(Q_t\dot Q_t+\alpha(t)\Id\bigr)z_t,
    \qquad z_0=0,
\end{equation*}
where $r_t=Q_tr_t+z_t$ and $Q_t\dot Q_tQ_t=0$ were used. Uniqueness implies $z_t=0$ for every $t<1$, proving $Q_tr_t=r_t$. It follows that $Q_t\dot Q_t r_t=0$, so $\dot Q_t r_t$ is tangent at $y_t$ and is orthogonal to $r_t$. Hence
\begin{equation*}
    \frac{\dd}{\dd t}\|r_t\|^2=-2\alpha(t)\|r_t\|^2
\end{equation*}
almost everywhere. Integrating yields $\|r_t\|^2=q_\perp(t)^2\|r_0\|^2$ and thus the stated norm identity.

Differentiating $x_t=y_t+r_t$ gives $u_t=\dot y_t+\dot Q_t r_t-\alpha(t)r_t$. Since $\dot y_t$ and $\dot Q_t r_t$ are tangent while $r_t$ is normal, applying $P_t$ and $Q_t$ proves the velocity identities. Under the additional projection assumption, $r(x_t)=x_t-\Pi(x_t)=r_t$ and $Q_{\Pi(x_t)}=Q_t$. Consequently,
\begin{equation*}
    \dist(x_t,\M)=\|r_t\|,\qquad
    Q_{\Pi(x_t)}u_t=-\alpha(t)r(x_t)
    =Q_{\Pi(x_t)}v_\theta(t,x_t),
\end{equation*}
which establishes compatibility in the normal direction, without requiring the learned tangential velocity to equal the target.

Finally, the divergent integral implies $q_\perp(t)\to0$. Therefore
\begin{equation*}
    \|x_t-x_1\|\leq\|y_t-x_1\|+q_\perp(t)\|r_0\|\to0
    \qquad\text{as }t\to1^-.
\end{equation*}
\end{proof}

Normality alone does not imply $\Pi(x_t)=y_t$; the projection condition remains a separate geometric assumption. The term $\dot Q_t r_t$ accounts for the changing normal space and generally cannot be omitted. For the schedule \eqref{eq:singular_schedule_main}, $q_\perp(t)=e^{-\alpha_0t}(1-t)^\beta$.

For a fixed affine proxy, $\dot Q_t=0$ and $r_t=q_\perp(t)r_0$. With a general tangential schedule $q_{\parallel}$, the path is $x_t=x_1+q_{\parallel}(t)s_0+q_\perp(t)r_0$ and its velocity is $u_t=\dot q_{\parallel}(t)s_0-\alpha(t)q_\perp(t)r_0$. Choosing $q_{\parallel}(t)=1-t$ yields the path used in the main text. For \eqref{eq:singular_schedule_main}, the normal target magnitude is
\begin{equation*}
    \|Q_i u_t\|
    =e^{-\alpha_0t}\bigl[\alpha_0(1-t)+\beta\bigr]
    (1-t)^{\beta-1}\|r_0\|.
\end{equation*}
For nonzero $r_0$, this tends to zero as $t\to1^-$ when $\beta>1$, tends to $e^{-\alpha_0}\|r_0\|$ when $\beta=1$, and diverges when $0<\beta<1$. It is constant in time in the linear-path special case $\alpha_0=0$, $\beta=1$. On $0\leq t\leq1-\delta$, a uniform upper bound is $(\alpha_0+\beta)\max\{1,\delta^{\beta-1}\}\|r_0\|$.

\subsection{Minimum Kinetic Action of the Normal Motion}
\label{app:normal_kinetic_action}

Fix a smooth base path $y_t$ on $[0,1]$, with continuously differentiable normal projector $Q_t$, and an initial displacement $Q_0r_0=r_0$. For any absolutely continuous normal-displacement curve $\eta(t)$ satisfying $Q_t\eta(t)=\eta(t)$, define its normal velocity by $D_t^\perp\eta=Q_t\dot\eta$. The identity
\begin{equation*}
    \dot\eta=\dot Q_t\eta+D_t^\perp\eta
\end{equation*}
follows by differentiating $Q_t\eta=\eta$. The first term is tangent, while the second is normal. Thus $\|D_t^\perp\eta\|^2$ measures only normal motion, excluding the tangential velocity required by the changing normal space. This is the coordinate-free form of kinetic energy in a normal frame with no internal rotation.

\paragraph{Variational characterization.}
For a fixed smooth base path and $\alpha_0=0$, define the weighted normal kinetic action
\begin{equation}
    \mathcal E_\beta[\eta]
    =\frac12\int_0^1(1-t)^{1-\beta}
    \|Q_t\dot\eta(t)\|^2\,\dd t,
    \label{eq:normal_kinetic_action}
\end{equation}
on absolutely continuous normal-displacement curves $\eta(t)\in N_{y_t}\M$ satisfying $\eta(0)=r_0$ and $\eta(1)=0$. The projection excludes the tangential velocity required by the changing normal space. For $\beta=1$, the weight is one; only for fixed affine geometry does this also equal the ordinary ambient kinetic action of the displacement.

Take $\alpha_0=0$ and $\beta>0$. Let $r_t$ solve \eqref{eq:normal_path_ode_main}, and write $q_\perp(t)=(1-t)^\beta$. The vector $\xi_t=r_t/q_\perp(t)$, defined for $t<1$, satisfies $\dot\xi_t=\dot Q_t\xi_t$, $Q_t\xi_t=\xi_t$, and $\|\xi_t\|=\|r_0\|$; it extends continuously to $t=1$. Consequently,
\begin{equation*}
    D_t^\perp r_t=-\beta(1-t)^{\beta-1}\xi_t,
    \qquad
    \mathcal E_\beta[r]=\frac{\beta}{2}\|r_0\|^2.
\end{equation*}
Consider any other absolutely continuous normal curve $\eta$ with $\eta(0)=r_0$, $\eta(1)=0$, and finite action \eqref{eq:normal_kinetic_action}. Set $h_t=\eta(t)-r_t$, so $Q_th_t=h_t$ and $h_0=h_1=0$. With $w_\beta(t)=(1-t)^{1-\beta}$, expanding the action gives
\begin{equation*}
\begin{aligned}
    \mathcal E_\beta[\eta]-\mathcal E_\beta[r]
    ={}&-\beta\int_0^1\xi_t^\top D_t^\perp h_t\,\dd t +\frac12\int_0^1w_\beta(t)\|D_t^\perp h_t\|^2\,\dd t.
\end{aligned}
\end{equation*}
Because $\dot\xi_t$ is tangent and $h_t$ is normal,
$\xi_t^\top D_t^\perp h_t=\dd(\xi_t^\top h_t)/\dd t$.
The first integral therefore vanishes, leaving
\begin{equation*}
    \mathcal E_\beta[\eta]
    =\frac{\beta}{2}\|r_0\|^2
    +\frac12\int_0^1w_\beta(t)\|D_t^\perp h_t\|^2\,\dd t.
\end{equation*}
Equality requires $D_t^\perp h_t=0$ almost everywhere. Then $\dot h_t=\dot Q_t h_t$, and $h_0=0$ implies $h_t=0$ by uniqueness. Hence $r_t$ is the unique minimizer. For a fixed affine proxy, $Q_t$ is constant and $D_t^\perp\eta=\dot\eta$; the minimizer reduces to $(1-t)^\beta r_0$, recovering ordinary kinetic-action minimization when $\beta=1$.

\paragraph{Relaxation and ambient curvature.}
For the full schedule \eqref{eq:singular_schedule_main}, let $V(t)=\tfrac12\|r_t\|^2$. If $x_t=y_t+r_t$ remains in the tubular neighborhood with $\Pi(x_t)=y_t$, this is also $\tfrac12\dist(x_t,\M)^2$. The normal dynamics give
\begin{equation*}
    \dot V(t)=-2\alpha(t)V(t),
    \qquad
    V(t)=e^{-2\alpha_0t}(1-t)^{2\beta}V(0).
\end{equation*}
Under $\tau=-\log(1-t)$, $\dd V/\dd\tau=-2(\alpha_0e^{-\tau}+\beta)V$: the normal potential decays at a bounded rate that is constant when $\alpha_0=0$. Meanwhile, $\dot r_t=\dot Q_t r_t-\alpha(t)r_t$ includes the change in normal direction. Together with the moving base point, this can produce a curved ambient trajectory even when $\beta=1$. Tangential transport and normal contraction occur simultaneously, not as two successive phases.

The variational statement fixes $y_t$ and minimizes only the weighted normal action. It neither selects the tangential transport nor establishes optimality of the full generative path. Switching between affine proxies requires a separate analysis and does not inherit a global minimum-action guarantee from this result.

\paragraph{Including a constant contraction term.}
For the full schedule \eqref{eq:singular_schedule_main}, use $q_\perp(t)=e^{-\alpha_0t}(1-t)^\beta$ and the action
\begin{equation*}
    \mathcal E_q[\eta]
    =\frac12\int_0^1
    \frac{\|D_t^\perp\eta(t)\|^2}{\alpha(t)q_\perp(t)}\,\dd t.
\end{equation*}
The unique minimizer is again the corresponding solution $r_t$ of \eqref{eq:normal_path_ode_main}. Indeed, $\xi_t=r_t/q_\perp(t)$ still satisfies $\dot\xi_t=\dot Q_t\xi_t$, while $D_t^\perp r_t=-\alpha(t)q_\perp(t)\xi_t$. With $h=\eta-r$, the same expansion and endpoint argument yield
\begin{equation*}
    \mathcal E_q[\eta]
    =\frac12\|r_0\|^2
    +\frac12\int_0^1
    \frac{\|D_t^\perp h_t\|^2}{\alpha(t)q_\perp(t)}\,\dd t,
\end{equation*}
where $\int_0^1\alpha(t)q_\perp(t)\,\dd t=1$ was used. Uniqueness follows as above. When $\alpha_0=0$, this action equals $\mathcal E_\beta/\beta$ and has the same minimizer.

\section{Proofs}
\label{app:proofs}

\subsection{Proof of Proposition~\ref{prop:loss_decomposition}}

At $x_t$, take $P$, $Q$, and $r$ from either geometry in \eqref{eq:unifiednotation} in Proposition~\ref{prop:loss_decomposition}. In both cases, $P$ and $Q=\Id-P$ are complementary orthogonal projectors, with $Pr=0$ and $Qr=r$. Decompose $u_t=Pu_t+Qu_t$ to obtain
\begin{equation*}
\begin{aligned}
    v_\theta-u_t
    &=Pw_\theta-\alpha r-Pu_t-Qu_t\\
    &=P(w_\theta-u_t)-\bigl(Qu_t+\alpha r\bigr).
\end{aligned}
\end{equation*}
The first term lies in the tangent space and the second lies in the normal space. Their inner product is zero, so the Pythagorean identity gives \eqref{eq:loss_decomposition}. The normal term vanishes if and only if $Qu_t=-\alpha(t)r$ holds.

\subsection{Proof of Theorem~\ref{thm:known_main}}

If $x\in\M$, then $\Pi(x)=x$ and $r(x)=0$, so
\begin{equation*}
    v_\theta(t,x)=P_xw_\theta(t,x)\in T_x\M.
\end{equation*}
Thus $\M$ is invariant.

For the transverse bound, use $V_\M$ from \eqref{eq:distance_energy_main}. Along \eqref{eq:known_vector_field},
\begin{equation*}
\begin{aligned}
    \dot V_\M
    &=r^\top\left(P_yw_\theta-\alpha r\right) =-\alpha(t)\|r\|^2
    =-2\alpha(t)V_\M,
\end{aligned}
\end{equation*}
because $r\perp T_{\Pi(x)}\M$. Therefore,
\begin{equation*}
    V_\M(x(t))
    =
    \exp\!\left(-2\int_0^t\alpha(\tau)\,\dd\tau\right)
    V_\M(x(0)).
\end{equation*}
Taking square roots gives \eqref{eq:known_bound_main}. Divergence of the integral at $t=1$ implies terminal convergence.

\subsection{Proof of Theorem~\ref{thm:fixed_proxy_main}}

Let $r_i(t)=Q_i(x(t)-x_i^\ast)$. Since $Q_iP_i=0$ and $Q_i^2=Q_i$, we have $\dot r_i=Q_i\dot x=-\alpha(t)r_i$. Hence
\begin{equation*}
    r_i(t)
    =
    \exp\!\left(-\int_0^t\alpha(\tau)\,\dd\tau\right)r_i(0),
\end{equation*}
which proves \eqref{eq:fixed_proxy_bound_main}.

\subsection{Proof of Theorem~\ref{thm:global_proxy_main}}

For each proxy, define $V_i(x)=\tfrac12\|Q_i(x-x_i^\ast)\|^2$. On a compact time interval within $[0,1)$, the compositions $V_i(x(t))$ and their finite minimum are absolutely continuous. At almost every time, $\dot x$ exists and satisfies the selected field. For an active minimizing index $i$, $\nabla V_i^\top\dot x=-2\alpha(t)V_i$. The derivative of the minimum, whenever it exists, is bounded above by this active derivative. Hence $\dot V_{\mathrm{proxy}}\leq-2\alpha(t)V_{\mathrm{proxy}}$ almost everywhere, and the integral comparison inequality yields \eqref{eq:global_proxy_bound_main}. Moreover,
\begin{equation*}
    \dist(x,\proxyM)
    =\min_i\|Q_i(x-x_i^\ast)\|
    =\sqrt{2V_{\mathrm{proxy}}(x)},
\end{equation*}
which gives the distance bound.

\subsection{Proof of Corollary~\ref{cor:polynomial_bound}}

For \eqref{eq:singular_schedule_main},
$\int_0^t\alpha(\tau)\,\dd\tau=\alpha_0t-\beta\log(1-t)$.
Substituting $e^{-\int_0^t\alpha(\tau)\,\dd\tau}
=e^{-\alpha_0t}(1-t)^\beta$ into \eqref{eq:known_bound_main} and
\eqref{eq:fixed_proxy_bound_main} gives the stated bound for
$\mathcal S=\M$ and $\mathcal S=\widehat\M_i$, respectively; the latter
uses $\dist(x,\widehat\M_i)=\|Q_i(x-x_i^\ast)\|$.
Taking square roots in \eqref{eq:global_proxy_bound_main} gives the same
bound for $\mathcal S=\proxyM$. Setting $t=1-\delta$ gives the remaining case.

\section{Algorithm and Implementation Details}
\label{app:algorithms}

\paragraph{Local proxy preprocessing.}
Algorithm~\ref{alg:proxy_preprocessing} forms the neighbor-difference matrix $Y_i=[x_{i,1}^\ast-x_i^\ast,\ldots,x_{i,k}^\ast-x_i^\ast]$ for each anchor. Its singular value decomposition $Y_i=U_i\Sigma_iV_i^\top$ gives $T_i=U_i[:,1:d_i]$. The rank $d_i$ is fixed or selected by explained variance. The resulting $P_i=T_iT_i^\top$ and $Q_i=\Id-P_i$ are held fixed during network training.

\paragraph{Training and compatibility.}
Algorithm~\ref{alg:proxy_training} draws $x_0\sim p_0$, $x_1\sim p_{\mathrm{data}}$, and $t\sim\mathcal U[0,1-\delta]$. The constructed $(x_t,u_t)$ and the geometry used to evaluate the field are independent of the network parameters. Proposition~\ref{prop:loss_decomposition} therefore gives identical parameter gradients for full-field regression and its tangential term, but any nonzero normal residual must still be reported. For a target-associated affine path, exact compatibility must hold with the state-selected sampling proxy, not merely the target's proxy. Observation-conditioned policies use the same conditioning and proxy selector during training and sampling.

\paragraph{Sampling and time transformation.}
Algorithm~\ref{alg:proxy_inference} integrates the learned field. With $\tau=-\log(1-t)$, use $t=1-e^{-\tau}$ and
\begin{equation*}
    \frac{\dd x}{\dd\tau}=(1-t)v_\theta(t,x),
    \qquad (1-t)\alpha(t)=\alpha_0e^{-\tau}+\beta.
\end{equation*}
The normal-feedback coefficient lies between $\beta$ and $\alpha_0+\beta$ and is constant when $\alpha_0=0$. Stopping at $\tau=-\log\delta$ corresponds to $t=1-\delta$, with experiment-specific cutoffs given below. The scale satisfies $q_\perp(0)=1$ and $\dot q_\perp=-\alpha q_\perp$, so changing the rate requires updating both path states and velocities.

\paragraph{Linear-path compatibility.}
For a fixed affine proxy containing $x_1$, the compatible base path is $y_t=x_1+(1-t)s_0$. Setting $\alpha_0=0$ and $\beta=1$ recovers the ambient linear conditional path. Keeping that linear path for other schedule parameters instead produces the normal residual $Qu_t+\alpha(t)r_t=[\alpha_0(1-t)+\beta-1]r_0$. This fixed-proxy calculation does not extend automatically to changing proxies or curved manifolds.

\section{Additional Ellipse Details}
\label{app:ellipse}

For a generated point $z=(z_1,z_2)$, the analytic ellipse residual is $|(z_1/2)^2+z_2^2-1|$. The off-proxy metric is $\|Q_{i(z)}(z-x_{i(z)}^\ast)\|$, while nearest-data distance measures closeness to the sampled arc rather than to its extended affine proxies.

The analytic ellipse is used only to evaluate generated samples. The training data contain 800 uniformly spaced angles on $[-3\pi/4,\pi/2)$, mapped to $(2\cos\varphi,\sin\varphi)$. Each affine proxy has dimension one and uses 31 nearest neighbors; the active proxy is selected by nearest anchor, not minimum normal residual. The prior distribution is uniform on $[-3,3]\times[-2,2]$.

The network has two hidden layers of width 64 with tanh activations (4,546 parameters). Adam runs for 1,600 updates with learning rate $0.003$, and batch size 256. Training uses linear conditional paths with $\beta=1$. Validation uses 256 fixed source/target/time tuples drawn from the same 800 target points; it is not a held-out target-data split. The plotted full-field losses include the normal compatibility residual. Their nonzero plateau therefore cannot be interpreted as tangential regression error alone.

Sampling uses RK4 in log time to $t=1-10^{-6}$, with 700 independent source draws. Mean terminal off-proxy error is $1.10\times10^{-6}$, median $1.03\times10^{-6}$, and maximum $2.57\times10^{-6}$. Mean nearest-data distance is $2.75\times10^{-3}$; the mean absolute analytic ellipse residual is $9.74\times10^{-5}$. The latter two metrics distinguish attraction to the union of extended affine lines from closeness to the sampled arc.

\section{Push-T Experimental Details}
\label{app:pusht}

\paragraph{Data and representation.}
We use the state-based \texttt{pusht\_cchi\_v7\_replay} demonstrations. The 206 demonstrations are split into 164 training, 21 validation, and 21 held-out test episodes. The condition is $c=(p_{a,x},p_{a,y},p_{b,x},p_{b,y},\cos\theta_b,\sin\theta_b)$, where $p_a$ and $p_b$ are world-frame pusher and block positions and $\theta_b$ is block orientation. The targets are eight successive absolute pusher positions, not object-frame poses or angular actions. Horizons never cross episode boundaries; 1,442 incomplete horizons are discarded. The episode-disjoint training/validation/test split contains 19,232/2,469/2,507 complete horizons. Normalization and the proxy atlas use training data only; held-out test horizons are not used for checkpoint selection.

\paragraph{Proxy field and training objective.}
For each normalized training horizon $a_i$ and condition $c_i$, we select 64 neighbors using the joint squared distance $\|c-c_i\|^2+0.1\|x-a_i\|^2$. PCA of horizon differences defines the tangent projector $P_i$ by retaining at least $95\%$ of their squared singular-value energy; $Q_i=I-P_i$ is its normal complement. The median retained dimension is three. The active index $i=i(x,c)$ minimizes the same joint distance, using the current generated horizon and observed condition, without access to the target action. With flow time $t$, MSFM uses $v_\theta=P_iw_\theta(t,x,c)-(1-t)^{-1}Q_i(x-a_i)$. This run regresses against the linear-path velocity $u=x_1-x_0$ at $x_t=(1-t)x_0+tx_1$, with demonstrated horizon $x_1$ and $x_0\sim\mathcal N(0,I_{16})$. Because the selected proxy need not contain $x_1$, this is not an exactly normal-compatible probability path: its total loss includes a model-independent transverse residual. The plotted parallel loss is $\|P_i(v_\theta-u)\|^2/16$, recovered by subtracting that residual using the original seeded samples. Fixed validation samples make this subtraction constant across epochs, preserving checkpoint selection. The MSFM selected-checkpoint validation loss is approximately $0.173$ after subtraction, versus $31.520$ before subtraction. 

\paragraph{Optimization and sampling.}
Each temporal U-Net has 156,338 trainable parameters and channel widths $(16,32,64)$, with a learning rate of $5\times10^{-3}$. Both networks use ELU activations, kernel size five, eight normalization groups, one residual block per level, one middle block, and a 32-dimensional time embedding. 
Push-T uses episode-disjoint data splits and AdamW with batch size 256 and weight decay $10^{-5}$.
Training uses gradient-norm clipping at 10 and early-stopping patience of 500 epochs within a 3,000-epoch maximum. FM and MSFM stop at epochs 537 and 543, selecting epochs 37 and 43, respectively. Validation evaluates velocity regression, not integrated-action prediction. Each rollout plan starts from standard Gaussian noise and uses 20 RK4 steps to $t=1-10^{-6}$; FM integrates in $t$, whereas MSFM uses $s=-\log(1-t)$. Thus, field-evaluation counts match, but integration grids differ. Generated targets are denormalized, clipped to $[0,512]^2$, and only the first target is executed before replanning.

\paragraph{Evaluation and interpretation.}
The reported task evaluation is restricted to disturbed rollouts to probe performance under repeated external perturbations. Both methods use initial-state seeds 0--49 with a 1,000-step limit. Success means maximum environment reward of at least $0.95$; the reported score is the episode maximum reward averaged over rollouts. Episodes schedule up to ten block translations before step 900, drawing displacements from $\mathcal N(0,100I_2)$ in raw coordinates (standard deviation 10 per axis), clipping block position to the workspace and resetting linear/angular velocity. Success counts are 37/50 (FM) and 41/50 (MSFM), with mean maximum rewards of $0.862$ and $0.922$, respectively. Success bars show Wilson 95\% intervals; reward bars show means. These results demonstrate higher observed success and reward for MSFM under the tested disturbance protocol.

\section{Robomimic Square Experimental Details}
\label{app:square}

\paragraph{Data, conditioning, and controller.}
The experiment uses \texttt{square\_ph\_low\_dim.hdf5} and \texttt{NutAssemblySquare} in Robosuite 1.4.1, with a Panda robot, 20\,Hz control, and an absolute \texttt{OSC\_POSE} controller. The proficient-human dataset contains 200 demonstrations, split into 180 training and 20 validation episodes, yielding 27,213 and 2,941 horizons; there is no separate offline test partition. Two successive 59-dimensional observations give a 118-dimensional condition containing object state, end-effector position/quaternion and velocities, gripper positions/velocities, and joint positions, sine/cosine encodings, and velocities. The dataset and simulator use the same observation ordering and training-derived normalization; images are not used. Demonstration targets reconstruct the intended absolute controller goals from scaled incremental commands, not from subsequently achieved poses. For current end-effector pose $(p_a,R_a)$ and future world-frame goal $(p_h,R_h)$, each token is $(R_a^\top(p_h-p_a)/\sigma_p,\operatorname{vec}(R_a^\top R_h),g_h)$, where $g_h$ is the gripper command and $\sigma_p=0.106773$\,m is the training-derived translation scale. The anchor remains fixed within each plan. Horizons are padded at episode ends, with padded tokens excluded from losses.

\paragraph{Geometry and probability paths.}
Both methods learn in the same ambient space $\R^{64\times13}$, with target geometry $(\R^3\times\SO(3)\times\R)^{64}$. FM uses linear interpolation from standard Gaussian noise. MSFM uses the tubular path $M_t=R_t+r_t$, where $R_t$ is the geodesic from the projected source matrix to the demonstrated rotation and $r_t$ is the explicit solution in \eqref{eq:so3_normal_path_appendix}, with $q_\perp(t)=e^{-\alpha_0t}(1-t)^\beta$. The saved run uses $\alpha_0=0$ and $\beta=1$; translation and gripper paths remain linear. For a current matrix $M$, let $R=\Pi_{\SO(3)}(M)$ be its nearest proper rotation. The analytical tangent projector acts on a matrix direction $Z$ as $P_R(Z)=R\,\skewop(R^\top Z)$, and the rotational field is $P_R(w_\theta)-\alpha(t)(M-R)$ with $\alpha(t)=1/(1-t)$. Training regresses the learned tangent component and reports the analytical normal residual separately. Losses mask padded tokens and use equal position/rotation/gripper weights, with rotational matrix errors weighted by the canonical factor $1/2$. The tubular compatibility statement applies within the smooth projection neighborhood; ambient Gaussian initialization does not place intermediate states exactly on $\SO(3)$.

\paragraph{Optimization and checkpoint selection.}
Each temporal U-Net has 859,347 trainable parameters and channel widths $(32,60,124)$. Both networks use SiLU activations, kernel size three, four normalization groups, two residual blocks per level, two middle blocks, a 78-dimensional time embedding, and a 74-dimensional conditioning embedding. 
Square task uses episode-disjoint data splits and batch size 256.
Training uses identical initial parameters, AdamW, a 1,000-step learning-rate warmup followed by cosine decay from $3\times10^{-4}$ to $3\times10^{-5}$, gradient clipping at 10, and exponential-moving-average weights with decay $0.999$. The budget is 25,000 updates (234 epochs, with the last epoch partial). Rollout checkpoints minimize the integrated validation score over the first four actions, using 256 fixed validation examples and four Gaussian draws per example. This score averages squared normalized position error, squared rotation geodesic error after projection, and squared clipped-gripper error, divided by seven. The velocity-loss minima marked in Figure~\ref{fig:square}(a,b) occur at steps 10,593 and 11,021.

\paragraph{Sampling and paired evaluation.}
Both policies use RK4. FM integrates uniformly to $t=1$; MSFM integrates to $1-10^{-10}$. Raw generated matrices are composed with the fixed observation anchor; the decoder applies the same hand-to-legacy-controller rotation correction and one final $\SO(3)$ projection before converting to absolute axis-angle commands. The first four of 64 generated actions are executed before replanning. Evaluation uses the same saved bank of 50 initial simulator states, Gaussian source, and a 500-step limit, with no injected disturbances. Success is the simulator's nut-assembly success flag.

\paragraph{Results and scope.}
FM succeeds on 30/50 episodes and MSFM on 36/50; 25 pairs succeed with both, five only with FM, and eleven only with MSFM.
Geometric deviation is $\|M-\Pi_{\SO(3)}(M)\|_F$, averaged over the generated horizon and replanning times within each episode, then over episodes, before execution projection. Its mean is $0.0451$ for FM and $6.40\times10^{-7}$ for MSFM, with lower MSFM deviation in all 50 pairs. The residual measures rotational validity, not accuracy relative to the expert action.

\end{document}